\documentclass{article} 
\usepackage{iclr2027_conference,times}

\usepackage{amsmath,amsfonts,bm}

\def\eqref#1{equation~\ref{#1}}

\def\1{\bm{1}}

\DeclareMathAlphabet{\mathsfit}{\encodingdefault}{\sfdefault}{m}{sl}
\SetMathAlphabet{\mathsfit}{bold}{\encodingdefault}{\sfdefault}{bx}{n}

\usepackage{hyperref}
\usepackage{url}

\usepackage{amsmath}
\usepackage{cleveref}
\usepackage{bm}
\usepackage{amsmath}
\usepackage{todonotes}
\usepackage{wrapfig}
\usepackage{physics}
\usepackage{amssymb}
\usepackage{amsthm}
\usepackage[utf8]{inputenc} 
\usepackage[T1]{fontenc}    
\usepackage{booktabs}       
\usepackage{amsfonts}       
\usepackage{nicefrac}       
\usepackage{microtype}      
\usepackage{lipsum}
\usepackage{fancyhdr}       
\usepackage{graphicx}       
\usepackage{physics}
\usepackage[dvipsnames]{xcolor}
\usepackage[table,xcdraw]{xcolor}
\usepackage[most]{tcolorbox}

\hypersetup{colorlinks=true,citecolor=NavyBlue,linkcolor=NavyBlue,urlcolor=NavyBlue}

\newtcolorbox{promptbox}[1]{%
   breakable,
   enhanced,
   colback=gray!4,
   colframe=gray!55!black,
   boxrule=0.45pt,
   arc=2.5pt,
   left=6pt, right=6pt, top=5pt, bottom=5pt,
   title=\textsf{\small\bfseries #1},
   fonttitle=\sffamily\small\bfseries,
   coltitle=white,
   colbacktitle=gray!55!black,
   attach boxed title to top left={yshift=-2mm, xshift=4mm},
   boxed title style={colback=gray!55!black, sharp corners, boxrule=0pt},
   fontupper=\small\ttfamily,
   before upper={\setlength{\parindent}{0pt}\setlength{\parskip}{2pt}},
}

\newtcolorbox{examplebox}[1]{%
   breakable,
   enhanced,
   colback=blue!2,
   colframe=blue!35!black,
   boxrule=0.45pt,
   arc=2.5pt,
   left=6pt, right=6pt, top=5pt, bottom=5pt,
   title=\textsf{\small\bfseries #1},
   fonttitle=\sffamily\small\bfseries,
   coltitle=white,
   colbacktitle=blue!35!black,
   attach boxed title to top left={yshift=-2mm, xshift=4mm},
   boxed title style={colback=blue!35!black, sharp corners, boxrule=0pt},
   fontupper=\small,
   before upper={\setlength{\parindent}{0pt}\setlength{\parskip}{2pt}},
}

\newtheorem{theorem}{Theorem}[section]

\newtheorem{corollary}[theorem]{Corollary}

\newtheorem{assumption}[theorem]{Assumption}

\newtheorem{remark}[theorem]{Remark}

\title{
Harness Evolution as Learning: Approximation, Generalization, and Optimization Limits of Self-Improving Personal Agents}

\author{Zeyu Gan, Zixuan Gong, Yong Liu\thanks{Corresponding Author.} \\
Gaoling School of Artificial Intelligence\\
Renmin University of China\\
Beijing, China \\
\texttt{\{zygan,zxgong,liuyonggsai\}@ruc.edu.cn}
}

\iclrfinalcopy 
\begin{document}

\maketitle

\begin{abstract}
As the capabilities of large language models (LLMs) continue to advance, increasing attention is turning to how to translate their abilities into useful behavior. Personal agents bring this question into everyday settings, where models are expected to serve individual users and continually adapt to their preferences. With the underlying model held fixed, such adaptation relies on harness engineering: designing and evolving the surrounding layer that manages context, memory, tools, and execution. Despite rapid progress, the factors governing effective harness evolution remain insufficiently understood. To narrow this gap, we investigate three central questions concerning harness architecture, harness scale, and self-evolution algorithms through complementary empirical and theoretical analyses. Empirically, we introduce a preference-oriented benchmark and systematically characterize the capabilities and limitations of personal agents associated with these three dimensions. Theoretically, we formulate harness evolution as a learning problem and explain these phenomena through approximation, generalization, and optimization errors. Analyses of reachable policies, capacity under finite interaction evidence, and biased update dynamics provide theoretical accounts of the observed phenomena. Together, these results offer a unified perspective on the limits of personalization through harness evolution and inform future harness design. We open-source our code at \url{https://github.com/ZyGan1999/self-evolving-harness-as-learning}. 
\end{abstract}

\section{Introduction}
\label{sec:introduction}
As large language models (LLMs) become more capable, research attention increasingly extends from how to train a good model to how to use one to accomplish useful tasks~\citep{wang2024survey}.
LLM agents embody this direction by coupling language-model reasoning with tools, memory, and repeated interaction with an environment~\citep{yao2023reactsynergizingreasoningacting,NEURIPS2024_swe_agent}.
The resulting applications bring a concrete promise of everyday assistance: an agent might locate a receipt across applications, organize the files needed for a project, or diagnose and repair a failing software test.
Though extensive works turn parts of this promise into executable tasks, they also expose the difficulty of translating model capabilities into reliable actions~\citep{trivedi-etal-2024-appworld,ICLR2024_swe_bench}.
With the growing demand for commercial and open-source assistants, the design of more reliable and advanced agent systems has become an increasingly practical research question. 

A particularly relevant setting is the \emph{personal agent}: an assistant that works within a user's persistent workspace and is accessed through their own computers, servers, or everyday communication tools.
OpenClaw~\citep{openclaw} and Hermes Agent~\citep{hermes} connect assistance to messaging and recurring daily workflows, while Claude Code~\citep{claude_code} and Codex~\citep{codex} emphasize software development within the user's working environment. 
Personal agents must accommodate requirements that vary across users and persist across tasks, for example, a preferred message sign-off or conventions for payment notes, 
while following an API-backed deployment regime. 
In this regime, the agent runtime and personalization state are locally controlled, while the foundation model is remotely served and its parameters are not updated by the local agent.
Hence, adapting behavior through external state and procedures becomes a practical alternative to modifying the model itself~\citep{zhao2024expel}. One practical attempt is to make the model behave appropriately for an individual user over continued interaction by constructing an extra surrounding layer. 

The surrounding layer that constructs model inputs, manages persistent state, exposes tools, and controls execution is commonly called an \emph{agent harness}. Designing this layer is the subject of \emph{harness engineering}~\citep{li2026agentharness}.
Existing approaches span interaction interfaces, context retrieval and compaction, reusable skills, execution checks, and recovery procedures~\citep{NEURIPS2024_swe_agent,anthropic2025context,anthropic2025harnesses}.
For a personal agent, this layer also provides a place to encode user-specific requirements and revise them as new evidence arrives.
Such \emph{harness evolution} has become an active research direction, with particular interest in self-evolution: using an LLM to interpret interaction feedback and update the memory, skills, or code that govern subsequent behavior~\citep{ICLR2026_ACE,yang2026autoskillexperiencedrivenlifelonglearning,pan2026mstartaskdeservesmemory}.
The prospect is appealing: an assistant could become better adapted to its user simply through accumulated experience, without retraining its foundation model.

Yet the diversity of harness designs and evolution algorithms leaves three practical tensions, illustrated schematically in Figure~\ref{fig:intro}.
\textbf{First, the effectiveness of a harness depends on what the preference requires.}
Panel~(a) contrasts two everyday requests: a remembered instruction to use a friendly tone may suffice to shape an email draft, whereas honoring ``leave at least 30 minutes between my meetings'' requires checking calendar state and reasoning about time constraints before booking.
This contrast is consistent with evidence that memory designs transfer unevenly across tasks 
~\citep{pan2026mstartaskdeservesmemory,zhou2026gettingbetterworkingyou}.
\textbf{Second, increasing harness size does not necessarily improve performance.}
Panel~(b) illustrates how an agent can omit a stored preference despite maintaining an increasingly comprehensive memory document.
Accumulated instructions can interact with one another and interfere with the agent's output, so their benefits need not increase with their number~\citep{ICLR2025_do_llms_recognize,zhou2026teparevokingstalememories,anthropic2025context}.
\textbf{Third, repeated self-evolution does not guarantee sustained improvement.}
Panel~(c) illustrates one potential source of this difficulty: a correction about a missing signature is turned into an unsupported full-name requirement and an additional politeness rule, 
suggesting that additional iterations alone may not overcome the underlying bottlenecks~\citep{ICLR2026_ACE,nie2026tthetesttimeharnessevolution,liu2026adaptiveautoharnesssustainedselfimprovement}.
Although existing studies document individual failures and partial theoretical accounts have begun to emerge~\citep{gan2026blackboxsurveytheory}, a common basis for standardized, quantitative empirical study of all three tensions, alongside a convincing theoretical account of how they arise and relate to one another in personal-agent harness evolution, remains lacking.
This gap makes it difficult to determine when to change the harness mechanism, adjust its scale, or improve the evolution algorithm.

\begin{figure}[tp]
    \centering
    \includegraphics[width=\linewidth]{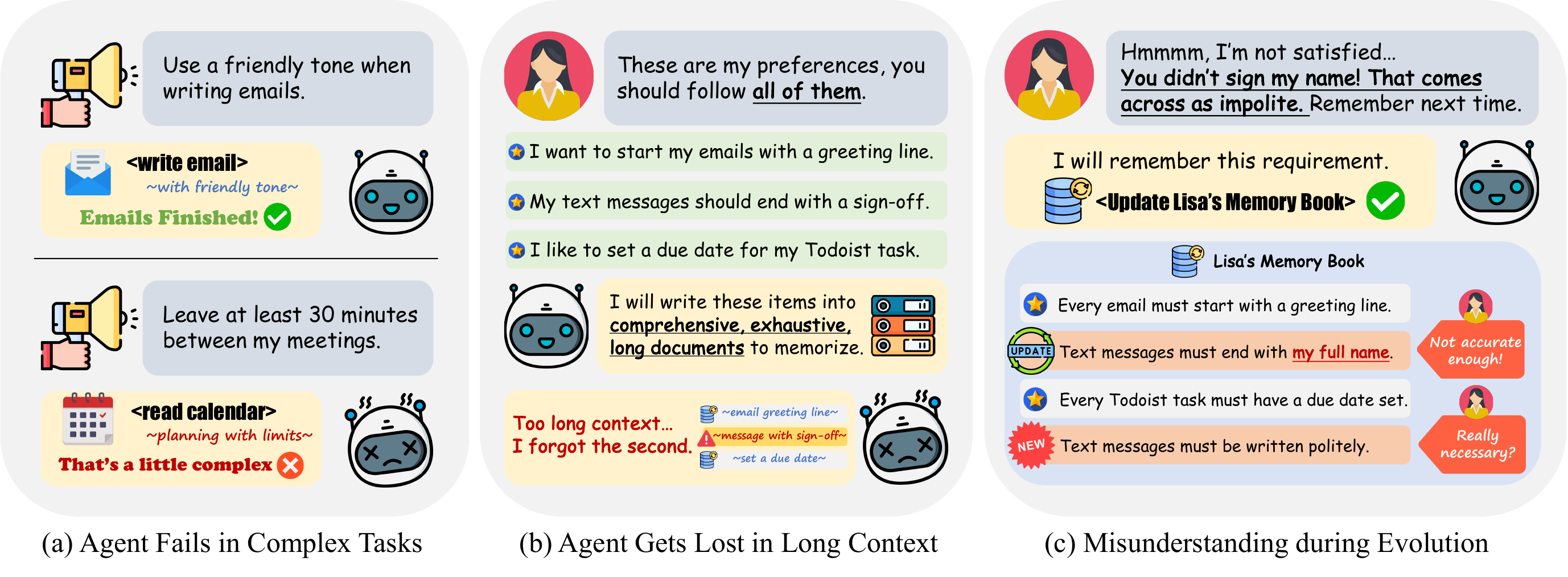}
    \caption{Schematic examples of three practical tensions in personal agent harness evolution: (a) effectiveness varies with preference requirements; (b) agents can overlook preferences stored in long memories; and (c) misinterpreted feedback can produce inaccurate or unnecessary memory updates.}

    \label{fig:intro}
    \vspace{-20pt}
\end{figure}

To address this gap, we study harness evolution from complementary empirical and theoretical perspectives.
Empirically, we introduce a preference-oriented benchmark that makes the three tensions measurable and enables their systematic characterization.
Theoretically, we cast harness evolution as a learning problem and organize its risk gap into \emph{approximation}, \emph{generalization}, and \emph{optimization} errors, following the classical learning-theoretic separation. 
Latent-concept, information-theoretic, and fixed-point analyses then provide conditional explanations for the empirical phenomena.

Our contributions are twofold.
\textbf{Empirically}, we provide a common testbed and controlled comparisons showing that explicit context reliably handles some local preferences while computational mechanisms improve more demanding ones, that memory expansion can yield non-monotonic compliance under the evaluated construction, and that several self-evolving recipes retain substantial oracle gaps.
\textbf{Theoretically}, 
we introduce a unified learning-theoretic framework for harness evolution, linking architecture, memory scale, and update dynamics to approximation, generalization, and optimization errors, with analyses of reachability limits, memory-capacity and estimation bounds, and residual error under biased updates. 
The combined perspective links observed failures to distinct questions about harness design and learning.
The remainder of the paper reviews related work in Section~\ref{sec:related-works}, presents the formulation in Section~\ref{sec:preliminary}, and develops the empirical and theoretical analyses in Section~\ref{sec:main-results}.
Finally, Section~\ref{sec:conclusion} draws a brief conclusion. 

\section{Related Works}
\label{sec:related-works}
\paragraph{Personal Agents. }
Personal agents increasingly operate in persistent user environments, from messaging-based assistants such as OpenClaw~\citep{openclaw} to workspace-based coding assistants such as Codex~\citep{codex} and Claude Code~\citep{claude_code}. 
Earlier research established memory as a substrate for persistent agent behavior~\citep{park2023generative,packer2024memgptllmsoperatingsystems}. 
Personalization also requires applying memory appropriately. 
\citet{ICLR2025_do_llms_recognize} evaluate whether models infer and follow user preferences across extended conversations. 
\citet{wang2026openclawrltrainagentsimply} take a complementary route by converting subsequent user and environment responses into training signals for online reinforcement learning. 
Our setting fixes the foundation model and studies how a local harness learns to express user preferences through the available interaction and feedback channels. 

\paragraph{Harness Engineering. }
An agent's behavior depends on how the harness system constructs context, exposes actions, maintains state, and checks outcomes. 
\citet{yao2023reactsynergizingreasoningacting} first interleave reasoning with environment actions, 
\citet{NEURIPS2024_swe_agent} further demonstrate the importance of interfaces for navigating, editing, and executing code. 
\citet{pmlr-v235-wang24h} then use executable code as a compositional action representation. 
Instead, \citet{xia2025agentless} structure software repair into localization, repair, and patch validation. 
Industrial accounts similarly emphasize context curation and compaction, persistent progress artifacts, and testing across sessions~\citep{anthropic2025context,anthropic2025harnesses}.
Verification and evaluation provide complementary evidence: 
\citet{ICLR2024_swe_bench} test repository-level repairs, while 
\citet{trivedi-etal-2024-appworld} evaluate tool-mediated tasks through environment outcomes. 

\paragraph{Self-Evolving Harness. }
The community has begun to apply automated adaptation to modify several components of a harness. 
\citet{NEURIPS2023_Reflexion} store verbal feedback in an episodic buffer, and 
\citet{zhao2024expel} extract reusable insights from task experience. 
\citet{agrawal2026gepareflectivepromptevolution} search over prompts using trajectory reflection and evaluated candidate selection. 
For persistent context, 
\citet{ICLR2026_ACE} use structured playbooks and a grow-and-refine process. 
\citet{zhou2026teparevokingstalememories} track the validity of keyed precedents and revoke stale memories. 
More recent approaches consider targeting dynamic user adaptation~\citep{yang2026autoskillexperiencedrivenlifelonglearning,zhou2026gettingbetterworkingyou}. 
Evolution can also change the mechanism that uses memory or controls actions~\citep{zhang2025memevolvemetaevolutionagentmemory,pan2026mstartaskdeservesmemory,lou2026autoharnessimprovingllmagents,nie2026tthetesttimeharnessevolution,gan2026statisticalpriorsimplicitpreferences}. 
Analytical treatments are also emerging. 
\citet{liu2026adaptiveautoharnesssustainedselfimprovement} decompose an oracle-harness gap into evolution and adaptation losses to guide harness construction. 

\section{Preliminary \& Problem Formulation}
\label{sec:preliminary}

We formulate personalization as learning a local harness around a frozen foundation model, and use a three-way risk decomposition to organize the questions studied in this paper. 


\subsection{Personal Agents and Local Harness}
\label{sec:personal-agents-and-local-harness}
Let $z=(x,e)$ denote a task, where $x$ is the natural-language instruction and $e$ contains the application state and available tools. A frozen foundation model $f$ induces a distribution over interaction traces $\tau=(a_1,o_1,\ldots,a_T,o_T)$, which we write as $\tau \sim \pi_f(\cdot \mid z). $
Here $a_t$ is an attempted action (for example, an API call) and $o_t$ is the observation returned by the environment. The notation emphasizes that $f$ is held fixed throughout the paper; all adaptation happens outside its parameters. 

\begin{wrapfigure}{r}{0.5\textwidth}
    \centering
    \includegraphics[width=0.5\textwidth]{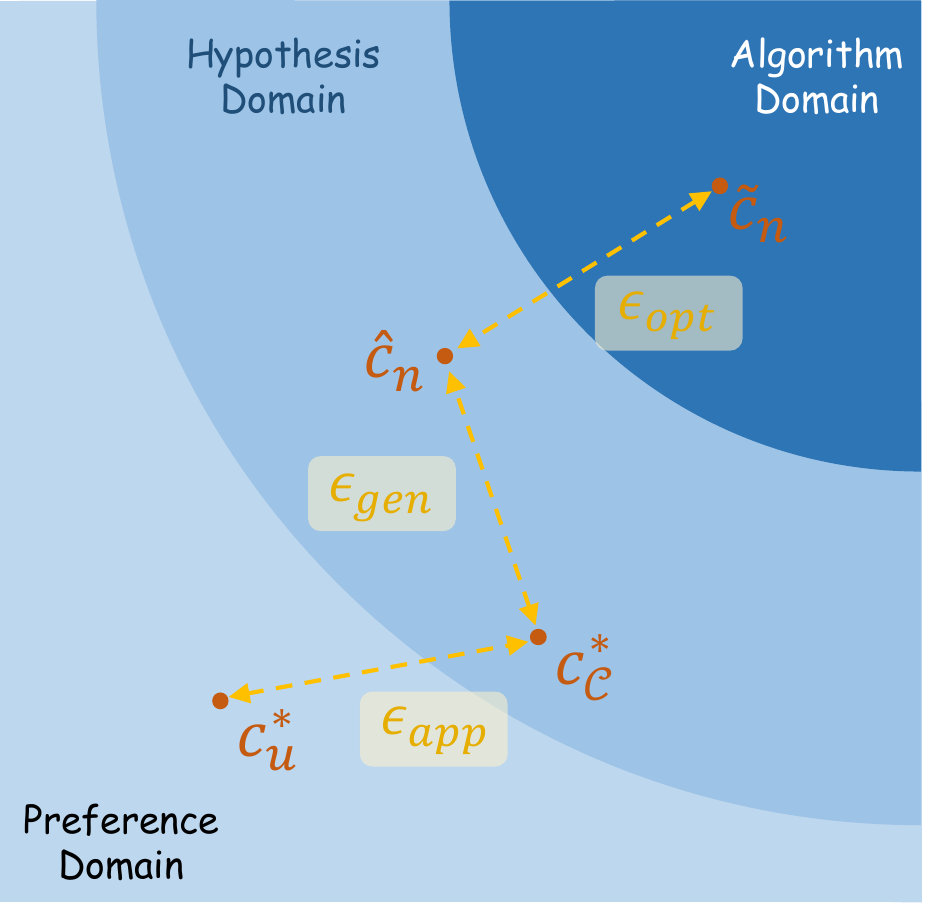}
    \caption{Harness evolution as learning. Approximation, generalization, and optimization errors separate the user optimum, class optimum, empirical optimum, and learned harness, respectively.}

    \label{fig:harness-as-learning}
    \vspace{-30pt}
\end{wrapfigure}

\emph{Harnesses} are local stateful transducers that mediate between the model and the environment. For a local state $s_t$ and model history $h_t$, a harness $c$ can construct the next model input, maintain auxiliary state, and optionally transform or reject an action: $(\bar h_t,\bar a_t,s_{t+1})=c(h_t,a_t,s_t,e). $
The composition $c\circ f$ therefore defines an induced policy $\pi_{c\circ f}$. 
This notation covers common personalization mechanisms: contextual memory and system-prompt edits change $\bar h_t$; checkers, routers, statistics, and autofill components can additionally change $\bar a_t$ or the available action set.

Specifically, we distinguish two broad hypothesis classes. A \emph{context harness} has $\bar a_t=a_t$ and acts only through an injected token sequence (memory, skills, or prompt text). A \emph{control harness} may compute over the interaction state or intervene in the action path, for example by returning a derived value, maintaining a counter, or rewriting a preference-owned field. 

\subsection{Harness Evolution as Learning}
\label{sec:harness-evolution-as-learning}
Let $f_u^*$ denote the ideal policy that would express the user's preferences if the model and the local harness were unrestricted. We evaluate a policy under the user's task distribution $\mathcal{P}_u$ with loss $\ell_u$ and risk $R(\pi)=\mathbb{E}_{z\sim\mathcal{P}_u,\,\tau\sim\pi(\cdot\mid z)}\left[\ell_u(\tau,z)\right]$. 
The personalization objective is consequently $\min_{c\in\mathcal{C}} R(\pi_{c\circ f})$, ideally approaching $R(\pi_{f_u^*})$.

The user-optimal policy is not observed directly. 
During interaction $i$, the agent produces a trace $\tau_i$ and receives weak user or environment feedback $y_i$. The learner observes the stream
$S_n=\{(z_i,\tau_i,y_i)\}_{i=1}^{n}$, where $y_i$ may be only acceptance/dissatisfaction, an artefact-level complaint, or a more informative correction. 

For a fixed harness class $\mathcal{C}$, define the population-optimal harness and the empirical-risk minimizer by
$c^*_{\mathcal{C}} \in \arg\min_{c\in\mathcal{C}} R(\pi_{c\circ f})$ and $\hat c_n \in \arg\min_{c\in\mathcal{C}} \widehat R_n(\pi_{c\circ f};S_n)$ respectively, where $\widehat R_n$ is the empirical risk induced by the observed feedback channel. An evolution algorithm is a model- and channel-dependent update operator
$c_i=U_f(c_{i-1};z_i,\tau_i,y_i)$, and $\tilde c_n=U_f(S_n)$,
with $\tilde c_n$ the harness actually deployed after $n$ interactions. 

\subsection{Error Decomposition: Approximation, Generalization and Optimization}
\label{sec:three-way-error-decomposition}
Consider the real risk gap between $\pi_{\tilde c_n\circ f}$ and $\pi_{f_u^*}$ during the harness evolution. By adding and subtracting the risks of $\pi_{c^*_{\mathcal{C} \circ f}}$ and $\pi_{\hat c_n \circ f}$, we obtain the following risk decomposition: 

\begin{equation}
\resizebox{0.9\linewidth}{!}{
$
\begin{aligned}
    R(\pi_{\tilde c_n\circ f})-R(\pi_{f_u^*})
    ={}\underbrace{R(\pi_{c^*_{\mathcal{C}}\circ f})-R(\pi_{f_u^*})}
        _{\displaystyle \epsilon_{\mathrm{app}}(\mathcal{C},f_u^*)}
    +\underbrace{R(\pi_{\hat c_n\circ f})-R(\pi_{c^*_{\mathcal{C}}\circ f})}
        _{\displaystyle \epsilon_{\mathrm{gen}}(\mathcal{C},n)}
    +\underbrace{R(\pi_{\tilde c_n\circ f})-R(\pi_{\hat c_n\circ f})}
        _{\displaystyle \epsilon_{\mathrm{opt}}(U_f;\mathcal{C},S_n)}.
    \nonumber
    \label{eq:error-decomposition}
\end{aligned}
$
}
\end{equation}

The first term ($\epsilon_{\mathrm{app}}$) is known as \textbf{Approximation Error}, and is a property of the reachable policy class. The second term ($\epsilon_{\mathrm{gen}}$) is known as \textbf{Generalization Error}, related to data and its effective capacity. And the third term ($\epsilon_{\mathrm{opt}}$) is often called \textbf{Optimization Error}, which is related to the evolution algorithm. 
As illustrated in Figure~\ref{fig:harness-as-learning}, the global optimal harness solution $c_u^*$ lies within the preference domain, while the optimal solution $c_{\mathcal{C}}^*$ in the hypothesis set $\mathcal{C}$ lies within the hypothesis domain. The gap between $c_u^*$ and $c_{\mathcal{C}}^*$ constitutes $\epsilon_{\mathrm{app}}$. The empirically optimal solution $\hat c_n$ also lies within the hypothesis domain, but is constrained by the limited interaction data, thus the gap between $\hat c_n$ and $c_{\mathcal{C}}^*$ constitutes $\epsilon_\mathrm{gen}$. Moreover, the realistic solution $\tilde c_n$ obtained during the evolution lies within the algorithm domain, the gap between $\tilde c_n$ and $\hat c_n$ constitutes $\epsilon_\mathrm{opt}$. 

These three types of errors characterize gaps from different perspectives within the harness evolution process. Approximation error suggests that the specific implementation of the harness inherently determines the upper limit of the evolutionary outcome. Generalization error considers the finite interaction and indicates that the size of the harness matters. Moreover, optimization error is closely linked to the specific evolutionary algorithm employed. The limitations of the algorithm itself will affect the quality of the solution. 
This decomposition 
provides the theoretical basis for the remainder of the paper, and we further use them to analyze the following proposed research questions: 

\emph{Q1 (harness architecture, approximation error).} 
Which user preferences can context harnesses reliably express, and when does additional computation or action-level control improve compliance? We compare harness mechanisms while fixing the model, actuator, and task distribution. 

\emph{Q2 (harness scale, generalization error).} 
How does the scale of a memory harness affect preference compliance when its contents are constructed from finite interaction evidence? We study this relationship through memory-length sweeps under a fixed model and preference set. 

\emph{Q3 (self-evolving algorithm, optimization error).} 
To what extent can repeated harness updates close the gap to an oracle context? We compare update recipes under a common feedback protocol and track their held-out performance across interaction checkpoints.

We study the proposed three research questions through complementary empirical and theoretical analyses. For each question, we first conduct controlled experiments on \emph{AppWorld-P}, a preference-oriented benchmark built on AppWorld~\citep{trivedi-etal-2024-appworld} that adds executable user-preference checks to the original tasks. The empirical analysis examines whether the phenomenon of interest arises in practice and characterizes when and how it occurs as we vary the harness mechanism, memory scale, or update procedure while keeping the foundation model fixed. We then analyze each question theoretically, identifying conditions and mechanisms that can explain the observed behavior under explicit modeling assumptions. This empirical-to-theoretical progression organizes~\cref{sec:main-results}: each subsection first presents the experimental evidence and then develops a theoretical interpretation through the lens of approximation, generalization, or optimization error. 

\section{Main Results}
\label{sec:main-results}
In this section, we present the main results for Q1---Q3 in~\cref{sec:approximation,sec:generalization,sec:optimization}, respectively. In each subsection, we first briefly introduce the problem settings and then present our empirical observations. Finally, we provide the theoretical analysis for understanding the empirical observations. A brief introduction to our experimental environment is as below. 

\paragraph{Benchmark and evaluation.} We conduct our experiments on AppWorld-P, a preference-oriented extension of AppWorld~\citep{trivedi-etal-2024-appworld}, an interactive environment in which agents perform everyday tasks through application APIs. Across all experiments, the foundation model is
\texttt{claude-haiku-4-5-20251001}~\citep{claude_haiku-4-5}. AppWorld-P preserves the original task environments and adds a preference-evaluation layer. Each user persona consists of executable rules specifying when a preference applies and how compliance is checked from the agent's actions and the resulting state. For example, a rule may require text messages to end with the user's first name or payment notes to follow a particular format. These rules make preference compliance programmatically measurable, while access to preference information is controlled by the experimental condition. For experiments involving harness evolution, the agent receives feedback after training interactions and updates its local harness; at each checkpoint, the harness is frozen and evaluated on held-out tasks. Static harness configurations are evaluated directly. This framework supports all three questions, with preferences and task pools tailored to each comparison. Our primary metric is the \emph{preference violation rate}: the fraction of applicable episode–rule pairs that violate the corresponding preference. Lower values indicate better compliance with the applicable preferences. Appendix~\ref{sec:detailed-exp-setting} provides full persona definitions, task splits, feedback channels, and implementation details, including the rule-specific checks and preference information available to each harness during interaction and evaluation.

\subsection{Approximation: On the Capability Boundaries of Context Harness}
\label{sec:approximation}
For Q1, we examine whether contextual specifications suffice for reliable preference compliance, or whether implementing a preference requires additional computation or action-level control.

\paragraph{Problem setting.}
Two preference groups probe the theoretical support distinction. \textbf{(i) In-Support:} local instructions the model is expected to follow once specified, including \emph{Private note format} (format payment notes using a prescribed template) and \emph{SMS sign-off} (end messages with the user's first name). \textbf{(ii) Out-of-Support:} preferences requiring history-based inference or exact computation, including \emph{SMS character checksum} (compute a character-count checksum for an outgoing message), \emph{Habitual card} (infer the user's habitual payment card from past interactions), and \emph{Running spend total} (maintain the cumulative amount paid to a recipient). 
Fixing the foundation model, constrained function-call actuator, and evaluation tasks, we compare five conditions: \textbf{(i) Context:} \emph{No memory} (no personalization context), \emph{Stated} (explicit golden preference description), and \emph{From history} (history-derived context); \textbf{(ii) Control:} \emph{Checker rejects} (a light-weight control implementation, verification and retry without computed answers) and \emph{Harness computes} (well designed control harness for specific preferences, including external computation or action rewriting). We report per-preference violation rates on applicable episodes. Detailed protocols are provided in Appendix~\ref{sec:detailed-Q1-setting}.

\begin{wrapfigure}{r}{0.5\textwidth}
    \centering
    \includegraphics[width=0.5\textwidth]{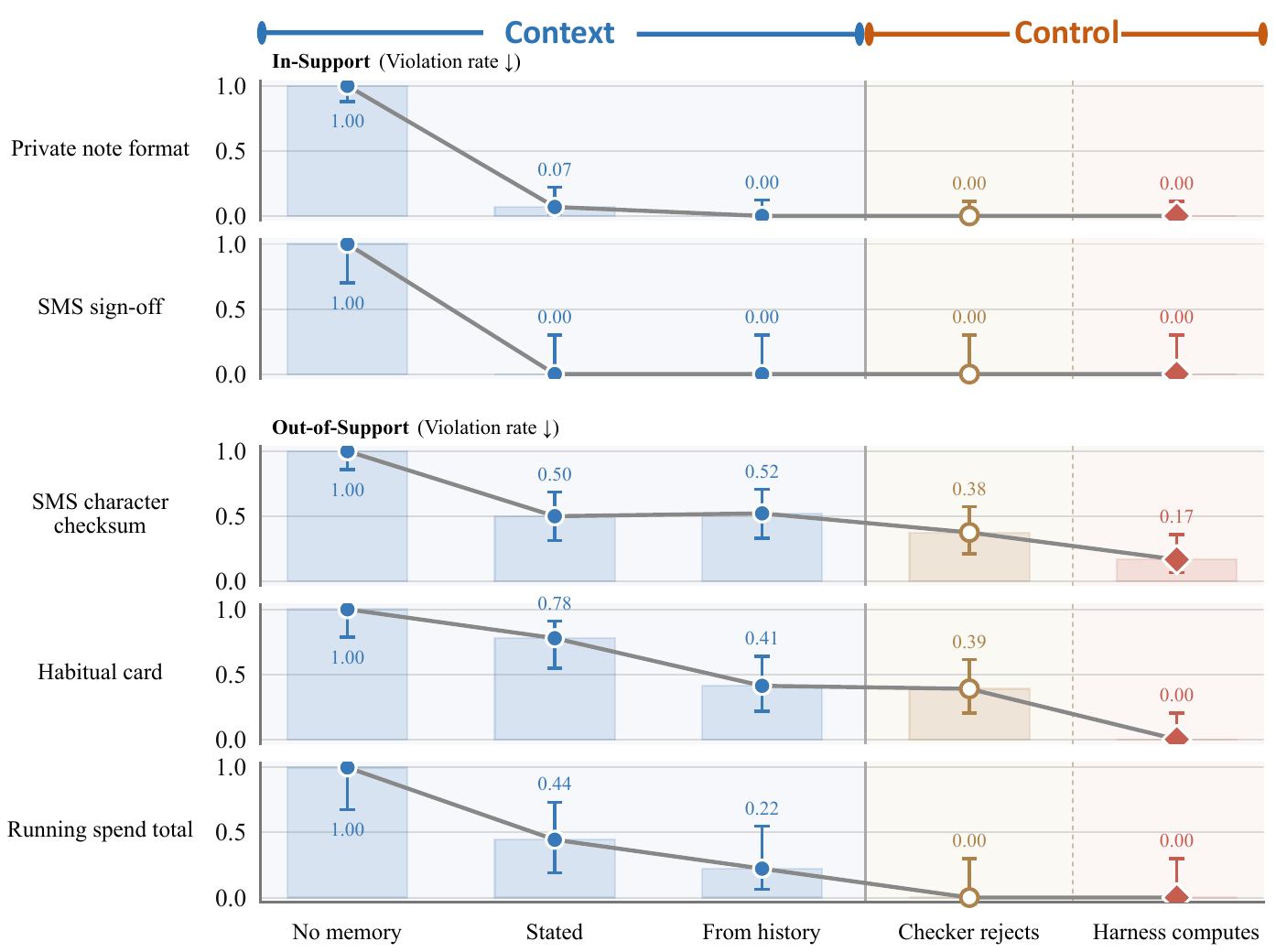}
    \caption{Preference violations by harness type. Context largely resolves in-support preferences; computation or action rewriting substantially improves compliance on out-of-support preferences.}
    \label{fig:q1-app}
\end{wrapfigure}

\paragraph{Empirical observation.}
The results show that explicit context is sufficient for the tested local instruction-following preferences, whereas history-dependent and computational preferences benefit substantially from additional harness mechanisms. 
As shown in Figure~\ref{fig:q1-app}, stating the preference reduces violations of \emph{Private note format} and \emph{SMS sign-off} from $1.00$ to $0.07$ and $0.00$, respectively. For \emph{SMS character checksum}, \emph{Habitual card}, and \emph{Running spend total}, stated context leaves rates of $0.50$, $0.78$, and $0.44$. History-derived context improves habitual-card inference and reduces running-total violations to $0.22$, without reliably resolving all three preferences. Lightweight \emph{Checker rejects} achieves zero observed running-total violations through verification and retry, and also reduces violation rates to $0.38$ on checksum and $0.39$ on habitual card. \emph{Harness computes} further reduces these two rates to $0.17$ and $0.00$, while also achieving $0.00$ on running totals. Thus, Q1 calls for matching the mechanism to the preference: textual specifications can activate familiar behaviors, but compliant executions and computational preferences need a control-based guidance. We next analyze how these mechanisms shape reachable policies through a latent-concept model. 

\paragraph{Theoretical analysis.}
We analyze this separation through a latent-concept model of the frozen model's reachable policies. Let $P_\theta:=p(\cdot\mid\cdot,\theta)$ be the trace policy for behavioral concept $\theta$, and $\mathcal M_f:=\overline{\operatorname{conv}}\{P_\theta:\theta\in\operatorname{supp}p(\theta)\}$ the closed convex hull of policies supported by the pretrained prior $p(\theta)$. Under Assumption~\ref{assump:context_latent}, context reweights these concepts independently of the test task, while leaving each $P_\theta$ fixed. The context- and control-reachable sets are $\mathcal R_{\mathrm{ctx}}:=\{\pi_{c\circ f}:c\in\mathcal C_{\mathrm{ctx}}\}$ and $\mathcal R_{\mathrm{ctrl}}:=\{\mathcal T_c\pi_f:c\in\mathcal C_{\mathrm{ctrl}}\}$, where $\mathcal T_c$ transforms model-proposed traces into executed traces, for example by computing an action argument or rewriting a preference-owned field. The following theorem links these reachable sets to approximation error, identifying when control can attain the user optimum while the context class retains a strictly positive risk gap (proof in Appendix~\ref{app:proof-q1}).


\begin{theorem}[Approximation gap of context and control harnesses]\label{thm:q1}
    Under Assumption~\ref{assump:context_latent}, $\mathcal R_{\mathrm{ctx}}\subseteq\mathcal M_f$, whereas $\mathcal R_{\mathrm{ctrl}}$ need not be contained in $\mathcal M_f$. If the user-optimal policy $\pi_{f_u^*}\in\mathcal R_{\mathrm{ctrl}}\setminus\mathcal R_{\mathrm{ctx}}$ 
    and $\Delta_u:=\inf_{\pi\in\mathcal R_{\mathrm{ctx}}}R(\pi)-R(\pi_{f_u^*})>0$, then
        $\epsilon_{\mathrm{app}}(\mathcal C_{\mathrm{ctrl}},f_u^*)=0
        <\epsilon_{\mathrm{app}}(\mathcal C_{\mathrm{ctx}},f_u^*)=\Delta_u.$
\end{theorem}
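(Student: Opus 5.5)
The proof splits into three claims: the inclusion $\mathcal R_{\mathrm{ctx}}\subseteq\mathcal M_f$, the possible failure of $\mathcal R_{\mathrm{ctrl}}\subseteq\mathcal M_f$, and the identification of the two approximation errors. For the inclusion, take any context harness $c\in\mathcal C_{\mathrm{ctx}}$ with injected token sequence $\kappa$. Under Assumption~\ref{assump:context_latent}, the induced trace distribution is a Bayesian mixture
\[
\pi_{c\circ f}(\cdot\mid z)=\int P_\theta(\cdot\mid z)\,p(\theta\mid\kappa)\,d\theta ,
\]
where the posterior weights $p(\theta\mid\kappa)$ do not depend on the test task $z$ and each $P_\theta$ is left unchanged. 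Because $p(\theta\mid\kappa)\propto p(\kappa\mid\theta)p(\theta)$, the posterior is absolutely continuous with respect to the prior. It is therefore supported in $\operatorname{supp}p(\theta)$, and the integral is a (possibly infinite) convex combination of $\{P_\theta\}$.

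I would approximate such a combination by finite convex combinations, for instance by discretizing the posterior or by the standard fact that a barycenter of a probability measure lies in the closed convex hull of its support. This places $\pi_{c\circ f}$ in $\mathcal M_f=\overline{\operatorname{conv}}\{P_\theta\}$. I expect this to be the only delicate step. It needs a topology on policies, for example total variation or weak convergence of trace laws uniformly in $z$, under which the mixture lies in the closure. The task-independence of the weights matters here, since otherwise one would obtain a $z$-dependent mixture that is not a single element of the hull. I would handle this with a Hahn--Banach separation argument: if the mixture lay outside the closed convex hull, some continuous linear functional would separate it strictly, but that functional evaluated on the mixture is an average of its values on the $P_\theta$, which is a contradiction.

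For the non-containment claim, an explicit example suffices. Take a preference requiring an exact computed argument, such as the checksum, where every $P_\theta$ in the support puts mass at most $1-\eta$ on the correct value for some task $z$. Then every element of $\mathcal M_f$ also has mass at most $1-\eta$ there, since this is a linear constraint preserved under convex combinations and closure. A control transform $\mathcal T_c$ that overwrites the field with the computed value yields mass $1$, so $\mathcal T_c\pi_f\notin\mathcal M_f$.

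For the error identities, recall that $\epsilon_{\mathrm{app}}(\mathcal C,f_u^*)=\inf_{c\in\mathcal C}R(\pi_{c\circ f})-R(\pi_{f_u^*})$, which I would read as an infimum if the argmin is not attained. Since $\pi_{f_u^*}\in\mathcal R_{\mathrm{ctrl}}$, some $c\in\mathcal C_{\mathrm{ctrl}}$ attains it. Also $R(\pi_{f_u^*})$ is a lower bound over all policies, because $f_u^*$ is the unrestricted user optimum. Together these give $\epsilon_{\mathrm{app}}(\mathcal C_{\mathrm{ctrl}},f_u^*)=0$. For the context class, the infimum over $\mathcal C_{\mathrm{ctx}}$ equals the infimum over $\mathcal R_{\mathrm{ctx}}$ by definition, so $\epsilon_{\mathrm{app}}(\mathcal C_{\mathrm{ctx}},f_u^*)=\Delta_u>0$ directly. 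This last part is bookkeeping; the substance of the theorem is the inclusion into $\mathcal M_f$ established above.
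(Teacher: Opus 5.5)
Your proposal is correct and follows the same route as the paper: the task-independent mixture $\pi_{c\circ f}(\cdot\mid z)=\int P_\theta(\cdot\mid z)\,p(\theta\mid c)\,d\theta$ gives $\mathcal R_{\mathrm{ctx}}\subseteq\mathcal M_f$. The error identities then follow directly from $\pi_{f_u^*}\in\mathcal R_{\mathrm{ctrl}}$, the user-optimality of $\pi_{f_u^*}$, and the definition of $\Delta_u$. You are more careful than the paper in two places: the paper treats ``probability mixture $\Rightarrow$ element of the closed convex hull'' as immediate, with no support or separation argument, and it establishes non-containment only conditionally (``if some $\mathcal T_c\pi_f\notin\mathcal M_f$'') via the kernel representation $\mathcal T_c\pi_f=\int q(\cdot\mid z,c,\theta)\,p(\theta)\,d\theta$, whereas your linear-constraint witness (a single task $z$ and a uniform $\eta$) actually exhibits a case with $\mathcal T_c\pi_f\notin\mathcal M_f$.
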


\emph{Messages from Theorem~\ref{thm:q1}.}
The theorem answers Q1 through policy reachability: \emph{context} selects among existing behaviors, while \emph{control} can transform their execution. Under the stated risk separation, refining context alone cannot eliminate the approximation gap, whereas a suitable control harness can reach the user-optimal policy. This perspective explains the contrast in Figure~\ref{fig:q1-app}: in-support formatting and sign-off requirements activate familiar instruction-following behaviors, while out-of-support habitual-card inference and running totals benefit from maintained statistics, and checksums from explicit computation. For harness design, the implication is to match the mechanism to the preference: use context to express supported behaviors, and introduce targeted state tracking, computation, or action rewriting where reliable execution requires these operations.


\subsection{Generalization: On the Impact of Harness Scale}
\label{sec:generalization}
Having compared harness mechanisms, we turn to Q2: whether adding more learned statements to a context harness consistently improves preference compliance under finite interaction evidence.

\paragraph{Problem setting.}
We use eight preferences from three applications: \textbf{(i) SMS:} \emph{SMS greeting} (start with a greeting), \emph{SMS sign-off} (end with the user's first name), and \emph{SMS terseness} (use at most five words); \textbf{(ii) Venmo:} \emph{Venmo private} (mark transactions private), \emph{Payment has note} (include a non-empty payment note), \emph{Payment-note lowercase} (write notes entirely in lowercase), and \emph{Single-word payment note} (use exactly one word per note); \textbf{(iii) Spotify:} \emph{Playlist over like} (save songs to playlists rather than liking them individually). Memory blocks draw from a fixed pool of statements learned from interaction trajectories and instance-level complaints, rather than hand-written oracle specifications. We select statements round-robin across preference groups, reusing entries for longer blocks; all concern scored preferences, with no unrelated content added. Holding the foundation model, actuator, preference set, and evaluation tasks fixed, we test nine lengths, $L\in\{0,1,2,3,5,10,20,60,150\}$, including the no-memory reference $L=0$. Each block is shuffled separately for each $(L,\mathrm{seed})$ and frozen for evaluation over three seeds, 12 tasks, and 10 rollouts per task. We report eight-rule violation rates and assess scoring-set sensitivity by re-scoring the same episodes from the full eight-preference harness on six rules: we exclude \emph{SMS terseness}, which can conflict with task-required message content and formatting, and \emph{Payment has note}, which is implied by \emph{Single-word payment note}. See Appendix~\ref{sec:detailed-Q2-setting} for memory-construction and evaluation details.

\begin{wrapfigure}{r}{0.5\textwidth}
    \centering
    \includegraphics[width=0.5\textwidth]{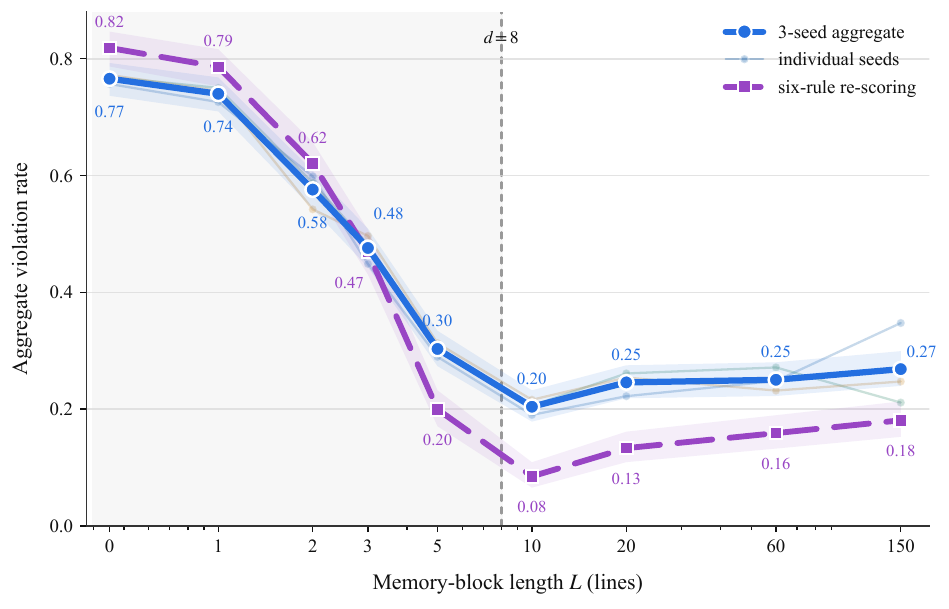}
    \caption{Non-monotonic violations across memory lengths. Solid: three-seed pooling over eight rules; faint: individual seeds; dashed: the same evaluation episodes re-scored on six preferences.}

    \label{fig:q2-gen}
\end{wrapfigure}

\paragraph{Empirical observation.}
Increasing memory scale improves preference compliance initially, but does not produce sustained gains under the evaluated memory construction. Figure~\ref{fig:q2-gen} shows that the pooled violation rate decreases from approximately $0.77$ at $L=0$ to $0.20$ at $L=10$, consistent with improved coverage as more preferences are represented in memory. Further expansion reverses part of this improvement: violations rise to approximately $0.25$--$0.27$ at $L=20$--$150$, even though every injected statement concerns a scored preference. The six-rule re-scoring retains the same overall decline-and-rise pattern at lower absolute rates, indicating that the pattern is not solely an artifact of including the two excluded rules in the aggregate metric. These results answer Q2 by showing that a larger collection of relevant, learner-generated statements is not necessarily a more effective harness. 
The following theoretical analysis examines the competing contributions of preference coverage and estimation error under finite interaction, providing a stylized account of why increasing memory scale can have both benefits and costs. 

\paragraph{Theoretical analysis.}
To explain the observed reversal in performance, we combine an information-theoretic memory model with concentration bounds for noisy feedback. Consider $d$ independent, unbiased binary preferences, evaluated uniformly and executable once correctly specified. Let $\mathcal C_L$ contain context harnesses with at most $L$ statements, with population optimum $c_{\mathcal C_L}^*$. The learned harness $\hat c_{n,L}$ minimizes statement-wise feedback disagreement by majority vote, using $m=\lfloor n/L\rfloor$ observations per statement from a budget of $n$ observations. We assume that recovering all target statements reproduces class-optimal preference behavior, linking statement-inference errors to the generalization gap. The following theorem bounds the resulting generalization error and shows how this bound increases with memory length $L$ when the feedback budget $n$ is fixed (detailed proofs are presented in Appendices~\ref{app:proof-q2}--\ref{app:proof-q2-cor}).


\begin{theorem}[Generalization cost of harness scale]\label{thm:q2}
    Under the above model, let $1\leq L\leq n$ and suppose each feedback observation independently reports its associated preference correctly with probability $1/2+\gamma$, where $0<\gamma\leq1/2$. Then
    \[
        \mathbb E_{S_n}\!\left[\epsilon_{\mathrm{gen}}(\mathcal C_L,n)\right]
        \leq L\exp\!\left(-2\gamma^2\left\lfloor\frac{n}{L}\right\rfloor\right).
    \]
\end{theorem}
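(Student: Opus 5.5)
The plan is to reduce the generalization gap to a union of per-statement majority-vote failures, and then control each failure with Hoeffding's inequality. The class $\mathcal C_L$ has $L$ statement slots, and $c^*_{\mathcal C_L}$ assigns to each slot its correct binary target value. The learned harness $\hat c_{n,L}$ fills slot $j$ by a majority vote over its $m=\lfloor n/L\rfloor$ dedicated feedback observations. The modeling assumption says that recovering all target statements reproduces class-optimal behavior. I will therefore first establish the pointwise bound
\[
\epsilon_{\mathrm{gen}}(\mathcal C_L,n)\le \sum_{j=1}^{L}\mathbf 1\{\hat b_j\neq b_j^*\},
\]
where $b_j^*$ is the target value of statement $j$ and $\hat b_j$ is the majority-vote estimate.

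This needs a normalization of the loss. Preferences are evaluated uniformly and are independent, and the loss counts per-rule violations normalized to $[0,1]$ (or per-statement contributions bounded by one). Each incorrectly specified statement then raises the risk by at most its share of the loss, so at most $1$. Correctly specified statements are executable by assumption and contribute nothing beyond the class optimum. If the loss is normalized by $d$, this argument actually gives the sharper factor $1/d$. The stated bound with leading factor $L$ is the crude version and holds in either case.

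Next, I will take expectations over $S_n$ and use linearity:
\[
\mathbb E_{S_n}[\epsilon_{\mathrm{gen}}]\le \sum_{j=1}^{L}\Pr(\hat b_j\neq b_j^*).
\]
Fix a slot $j$, and let $X_1,\dots,X_m$ indicate whether each of its observations reports $b_j^*$ correctly. These are i.i.d.\ Bernoulli$(1/2+\gamma)$. The majority vote errs, counting ties as errors to be safe, only if $\frac1m\sum_k X_k\le 1/2$. That is a deviation of at least $\gamma$ below the mean, so Hoeffding's inequality gives
\[
\Pr(\hat b_j\neq b_j^*)\le \exp(-2m\gamma^2).
\]
Summing over the $L$ slots yields $L\exp(-2\gamma^2\lfloor n/L\rfloor)$. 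The condition $L\le n$ guarantees $m\ge1$, so every slot receives data. The condition $\gamma\le 1/2$ keeps the success probability valid.

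The Hoeffding and union-bound steps are routine. The main obstacle is the first step: rigorously tying $\epsilon_{\mathrm{gen}}$, which is defined through $\hat c_n$ as an empirical risk minimizer, to the majority-vote harness. I would argue this as follows. Under statement-wise disagreement loss with disjoint observation blocks, the empirical risk decomposes additively across slots. Its minimizer is therefore exactly the slot-wise majority vote, with arbitrary tie-breaking, which is covered because ties were counted as errors. The remaining point is the per-error risk increment of at most one. This follows from the independence and uniform-evaluation assumptions, since a wrong statement affects only its own preference's violation indicator. If the paper's reproduction assumption is stated only for full recovery, I would invoke it slot-wise, using independence of the $d$ preferences to localize each error.
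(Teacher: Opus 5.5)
Your argument is correct and is essentially the paper's proof: Hoeffding on each statement's $m=\lfloor n/L\rfloor$ votes with ties counted as failures, followed by a union bound over the $L$ statements. The step you flag as the main obstacle needs no slot-wise localization, since the $\{0,1\}$ loss gives $0\le\epsilon_{\mathrm{gen}}(\mathcal C_L,n)\le 1$ and full recovery makes it vanish, so $\epsilon_{\mathrm{gen}}\le\mathbf{1}_E\le\sum_{\ell}\mathbf{1}_{E_\ell}$ directly, which is exactly how the paper argues.
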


For fixed $n$ and $\gamma$, this generalization bound is strictly increasing in integer $L$. Storing $\min\{L,d\}$ correct preferences gives the coverage bound $\epsilon_{\mathrm{app}}(\mathcal C_L,f_u^*)\leq(d-L)_+/d$, where $(x)_+:=\max\{x,0\}$. Combining the two bounds yields the following excess-risk guarantee.

\begin{corollary}[Coverage--estimation tradeoff]\label{cor:q2-excess-risk}
    Under the same model and feedback assumptions,
    \[
        \mathbb E_{S_n, U}\!\left[R(\pi_{\hat c_{n,L}\circ f})-R(\pi_{f_u^*})\right]
        \leq\frac{(d-L)_+}{d}+L\exp\!\left(-2\gamma^2\left\lfloor\frac{n}{L}\right\rfloor\right).
    \]
\end{corollary}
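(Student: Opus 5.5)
The plan is to reduce the corollary to the error decomposition of Section~\ref{sec:three-way-error-decomposition}, applied to the learned harness with no optimization step. Taking $\tilde c_n=\hat c_{n,L}$, i.e.\ the deployed harness is the majority-vote minimizer itself, the optimization term vanishes identically and
\[
R(\pi_{\hat c_{n,L}\circ f})-R(\pi_{f_u^*})=\epsilon_{\mathrm{app}}(\mathcal C_L,f_u^*)+\epsilon_{\mathrm{gen}}(\mathcal C_L,n).
\]
The first term is deterministic and does not depend on $S_n$. The only randomness, over $S_n$ and over the statement-selection or shuffling randomness $U$, sits in the second term. Taking expectations and using linearity therefore leaves two separate bounds to establish.

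\textbf{Approximation term.} I would bound it through the coverage argument already sketched before the corollary. The $d$ preferences are independent, binary, uniformly weighted, and executable once correctly specified. A harness in $\mathcal C_L$ that stores $\min\{L,d\}$ correct preference statements therefore complies exactly on those preferences. On each of the remaining $(d-L)_+$ preferences it incurs loss at most $1$, normalized by $1/d$ under the uniform evaluation. Since $c^*_{\mathcal C_L}$ is the population minimizer over $\mathcal C_L$, its risk is no larger than that of this particular harness. This gives $\epsilon_{\mathrm{app}}(\mathcal C_L,f_u^*)\le (d-L)_+/d$.

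\textbf{Generalization term.} Here I would invoke Theorem~\ref{thm:q2} directly. The only care needed is that its expectation is over $S_n$, while the corollary also averages over $U$. I would condition on $U$: the round-robin assignment fixes which statements receive which $m=\lfloor n/L\rfloor$ observations. Theorem~\ref{thm:q2} then holds conditionally, since its bound does not depend on $U$. Taking the outer expectation over $U$ yields $\mathbb E_{S_n,U}[\epsilon_{\mathrm{gen}}]\le L\exp(-2\gamma^2\lfloor n/L\rfloor)$.

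\textbf{Expected obstacle.} The main subtlety is the case $L>d$, where statements are reused. Theorem~\ref{thm:q2} and the coverage bound must then refer to the same class $\mathcal C_L$ and the same population optimum. I would check that redundant statements do not change the class optimum, which still covers all $d$ preferences so that $(d-L)_+=0$. I would also check that the Hoeffding-plus-union-bound count of $L$ statement failures in Theorem~\ref{thm:q2} remains an upper bound, possibly loose, on the number of misspecified preferences. Summing the two bounds then gives the corollary.
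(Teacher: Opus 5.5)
Your proposal is correct and follows essentially the same route as the paper: you split the excess risk of $\hat c_{n,L}$ into $\epsilon_{\mathrm{app}}+\epsilon_{\mathrm{gen}}$, bound the first by comparing $c^*_{\mathcal C_L}$ with the harness that stores $\min\{L,d\}$ correct preferences, and bound the second by Theorem~\ref{thm:q2}; the paper's additional Fano-type lower bound on $\epsilon_{\mathrm{app}}$ is not needed for the corollary. One small correction: in the paper $U=(U_1,\ldots,U_d)$ is the random user preference vector, not statement-shuffling randomness, so $\epsilon_{\mathrm{app}}$ is not deterministic, but both of your bounds hold for every realization of $U$, so conditioning on $U$ and then averaging works exactly as you describe.
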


\emph{Messages from Theorem~\ref{thm:q2} and Corollary~\ref{cor:q2-excess-risk}.}
Theorem~\ref{thm:q2} identifies a growing statistical burden behind memory expansion: a fixed feedback budget must support more statements, leaving fewer observations per statement and more opportunities for incorrect inference. Corollary~\ref{cor:q2-excess-risk} combines this increasing estimation bound with a decreasing coverage bound. Once coverage saturates, further expansion no longer reduces the approximation bound, while the estimation bound continues to increase. These competing effects can yield a U-shaped upper bound, consistent with the initial improvement and subsequent degradation in Figure~\ref{fig:q2-gen}. Q2 therefore reflects a coverage--estimation tradeoff: relevant statements expand what a harness can represent, but their reliability depends on the evidence supporting them. For harness design, memory growth should track both the amount and reliability of feedback: retain statements that add supported preference coverage, consolidate redundant descriptions, and gather further evidence before expanding uncertain parts of the memory.

\subsection{Optimization: On the Limits of Self-Evolving Harness}
\label{sec:optimization}
Beyond harness architecture and scale, Q3 concerns whether repeated feedback-driven updates approach oracle compliance and how this progress depends on the update procedure.

\begin{wrapfigure}{r}{0.5\textwidth}
    \centering
    \includegraphics[width=0.5\textwidth]{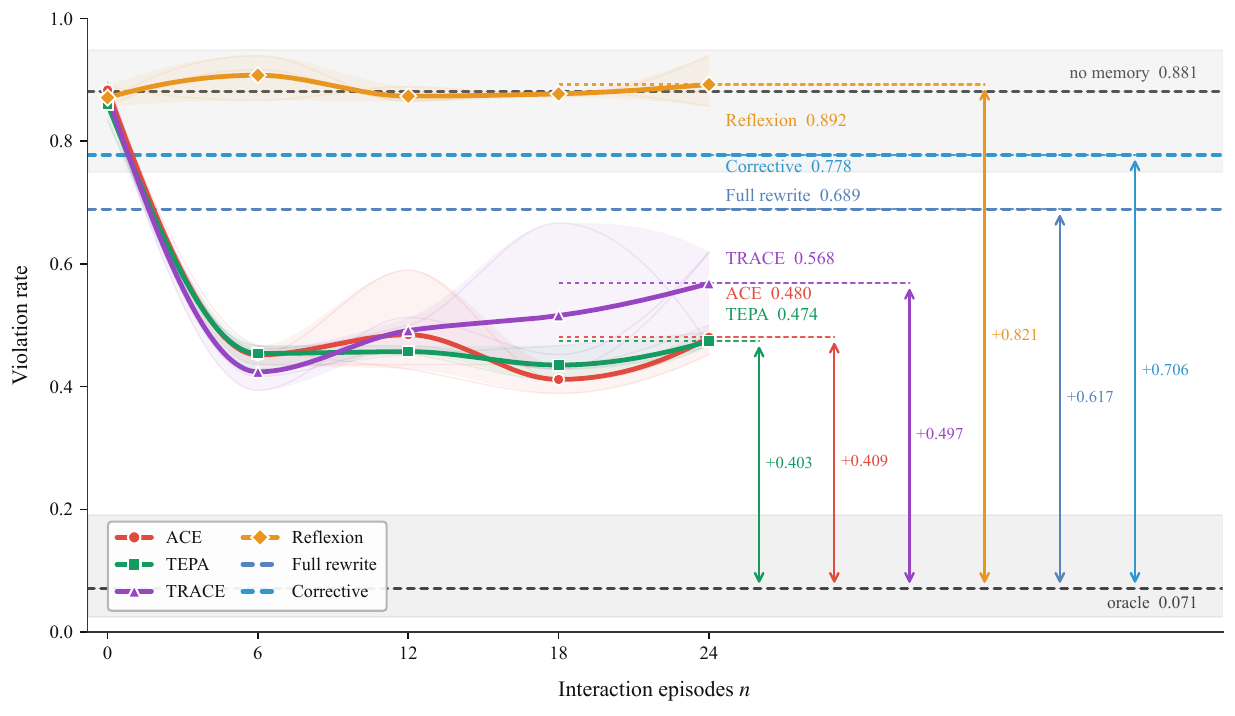}
    \caption{Self-evolution trajectories and remaining oracle gaps. Curves show three-seed means with min--max bands for the self-evolving harnesses. }

    \label{fig:q3-opt}
\end{wrapfigure}

\paragraph{Problem setting.} 
We fix a persona with five preferences: \emph{SMS greeting} (begin messages with a greeting word), \emph{SMS sign-off} (end with the user's first name), \emph{Venmo private} (mark transactions private), \emph{Payment-note category} (begin notes with the category tag), and \emph{Payment-note initials} (end notes with parenthesized dotted initials). Training feedback consists of instance-level complaints that specify the desired behavior for the first three preferences but identify only the defective field for the latter two, withholding the required category tag or exact initials format. This tests the use of available feedback. We compare \textbf{(i) Static references:} \emph{No memory} (no personalization memory) and \emph{Oracle context} (explicit descriptions of all five preferences); and \textbf{(ii) Self-evolving harnesses:} adaptations of ACE~\citep{ICLR2026_ACE} (incrementally updated itemized memory), TEPA~\citep{zhou2026teparevokingstalememories} (keyed precedents with replacement and revocation), TRACE~\citep{zhou2026gettingbetterworkingyou} (learned runtime checks with retries), and Reflexion~\citep{NEURIPS2023_Reflexion} (a bounded buffer of verbal reflections). Diagnostic arms use full-memory rewriting with instance-level or fully specified corrective feedback. With the foundation model, actuator, persona, and feedback protocol fixed, each of the four main methods updates from its own outcomes over 24 training interactions and three seeds. At $n\in\{0,6,12,18,24\}$, we freeze each harness and evaluate it on six held-out tasks with three rollouts per task. We compare self-evolving harnesses with the static references to characterize improvement and the remaining oracle gap. Detailed protocols are provided in Appendix~\ref{sec:detailed-Q3-setting}.

\paragraph{Empirical observation.}
Repeated harness updates narrow the oracle gap for some recipes, but none of the tested procedures closes it within the evaluated interaction budget. As shown in Figure~\ref{fig:q3-opt}, ACE, TEPA, and TRACE reduce violations to approximately $0.4$--$0.5$ after six interactions, compared with the no-memory reference of $0.881$. Their subsequent trajectories differ. ACE and TEPA fluctuate around similar levels and finish near $0.48$, leaving a gap of approximately $0.40$ to the oracle rate of $0.071$. TRACE loses part of its early improvement and ends near $0.57$, while Reflexion remains close to the no-memory reference throughout. The full-rewrite and corrective-feedback diagnostics also finish above the oracle, at approximately $0.69$ and $0.78$, respectively. The answer to Q3 is therefore that self-evolution can improve personalization, but additional interactions do not necessarily translate into continued progress toward oracle compliance. The similar ACE and TEPA endpoints suggest that the persistent gap is not specific to one memory-update recipe, while the other trajectories demonstrate the importance of the update procedure. Because the feedback withholds some target details, this gap must also be interpreted in light of the information available to the learner. We next examine conditions under which biased update dynamics can sustain residual error, providing a theoretical perspective on the limits of repeated self-evolution.

\paragraph{Theoretical analysis.}
We interpret these trajectories through a time-homogeneous Markov model of online harness updates and a Wasserstein fixed-point analysis. For a fixed model $f$, capacity $L$, and feedback protocol, let $U$ denote the update rule. Write $\nu_{n,L}$ and $\widehat\nu_{n,L}$ for the distributions of the deployed harness $\tilde c_{n,L}$ and empirical minimizer $\hat c_{n,L}$. 
The operator $\mathcal T_U$ maps a harness distribution to its distribution after one interaction and update. With $W_L$ denoting the first Wasserstein distance under a harness metric $\mathsf d_L$, define $\beta_{U,n,L}:=W_L(\mathcal T_U\widehat\nu_{n,L},\widehat\nu_{n,L})$, the displacement induced by updating the distribution of empirical minimizers. Assumptions~\ref{ass:q3-dynamics}--\ref{ass:q3-bias} quantify contraction by $\rho<1$, quadratic empirical-risk growth by $\mu>0$, and empirical-to-population estimation risk by $\epsilon_{n,L}\to0$ as $n\to\infty$. The following theorem links a persistent update residual to a positive optimization floor (a detailed proof is provided in Appendix~\ref{app:proof-q3}).


\begin{theorem}[Persistent optimization floor under biased updates]\label{thm:q3}
    Under Assumptions~\ref{ass:q3-dynamics}--\ref{ass:q3-bias}, $\nu_{n,L}$ converges to a unique invariant distribution and its expected risk converges, while
    \begin{align*}
    \liminf_{n\to\infty}\mathbb E\!\left[\epsilon_{\mathrm{opt}}(U;\mathcal C_L,S_n)\right]
    \geq\frac{\mu\beta_0^2}{2(1+\rho)^2}>0.
    \end{align*}
    If a family of update rules satisfies these assumptions with common constants $\rho,\mu,\beta_0$, the same positive lower bound holds for every rule in the family.
\end{theorem}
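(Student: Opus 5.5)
The plan is to split the argument into a convergence part, handled with standard Wasserstein contraction tools, and a lower-bound part, which turns a persistent displacement into a risk gap through quadratic growth. I take Assumption~\ref{ass:q3-dynamics} to say that $\mathcal T_U$ is a $\rho$-contraction in $W_L$, with $\rho<1$ and uniformly in $n$ along the time-homogeneous chain, on a complete space of harness distributions with finite first moment. I take Assumption~\ref{ass:q3-bias} to supply two further facts. First, quadratic empirical-risk growth: $\widehat R_n(c)-\widehat R_n(\hat c_{n,L})\ge \tfrac{\mu}{2}\mathsf d_L(c,\hat c_{n,L})^2$. Second, a persistent bias: $\liminf_n \beta_{U,n,L}\ge\beta_0>0$, together with $|R-\widehat R_n|$-type estimation errors bounded by $\epsilon_{n,L}\to 0$.

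\textbf{Step 1 (convergence).} Since $\nu_{n+1,L}=\mathcal T_U\nu_{n,L}$, Banach's fixed-point theorem in the $W_1$ space gives a unique invariant $\nu^\star$ with $W_L(\nu_{n,L},\nu^\star)\le\rho^n W_L(\nu_{0,L},\nu^\star)\to0$. Convergence of the expected risk then follows by Kantorovich--Rubinstein duality, provided $c\mapsto R(\pi_{c\circ f})$ is Lipschitz in $\mathsf d_L$, which I would include in or derive from the assumptions.

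\textbf{Step 2 (displacement lower bound on distance).} This step is the key inequality. By the triangle inequality and contraction,
\[
\beta_{U,n,L}=W_L(\mathcal T_U\widehat\nu_{n,L},\widehat\nu_{n,L})\le W_L(\mathcal T_U\widehat\nu_{n,L},\mathcal T_U\nu_{n,L})+W_L(\nu_{n+1,L},\nu_{n,L})+W_L(\nu_{n,L},\widehat\nu_{n,L}).
\]
The first term is at most $\rho W_L(\widehat\nu_{n,L},\nu_{n,L})$, and the middle term tends to $0$ by Step 1. Hence
\[
\liminf_n W_L(\nu_{n,L},\widehat\nu_{n,L})\ge \beta_0/(1+\rho).
\]

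\textbf{Step 3 (distance to risk).} Take a coupling of $(\tilde c_n,\hat c_n)$. By quadratic growth and Jensen,
\[
\mathbb E[\widehat R_n(\tilde c_n)-\widehat R_n(\hat c_n)]\ge\tfrac{\mu}{2}\mathbb E[\mathsf d_L^2]\ge\tfrac{\mu}{2}(\mathbb E\,\mathsf d_L)^2\ge\tfrac{\mu}{2}W_L(\nu_{n,L},\widehat\nu_{n,L})^2.
\]
The last inequality holds because any coupling dominates $W_1$; this requires using the actual joint law, which is fine because $W_1$ is an infimum over couplings. Replacing empirical by population risk costs at most $2\epsilon_{n,L}\to0$. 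Taking $\liminf$ yields $\liminf\mathbb E[\epsilon_{\mathrm{opt}}]\ge\mu\beta_0^2/(2(1+\rho)^2)$.

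The uniform-family claim is immediate, since the bound depends only on $(\rho,\mu,\beta_0)$.

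The main obstacle is Step 3's transfer from empirical to population excess risk. Quadratic growth is stated for $\widehat R_n$, while $\epsilon_{\mathrm{opt}}$ is defined through $R$, so I must ensure the estimation term enters additively and vanishes rather than scaling with $\mathsf d_L$. A secondary issue is that $\widehat\nu_{n,L}$ depends on $n$, so the contraction must be uniform along this moving target. I would check precisely how Assumption~\ref{ass:q3-bias} phrases $\beta_0$ (as a liminf or as a uniform bound) and adapt Step 2 accordingly.
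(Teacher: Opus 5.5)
Your proposal is correct and follows the paper's proof essentially step for step: Banach's fixed-point theorem for the $\rho$-contraction in $W_L$, the triangle-inequality bound $\beta_{U,n,L}\le(1+\rho)W_L(\nu_{n,L},\widehat\nu_{n,L})+W_L(\nu_{n+1,L},\nu_{n,L})$, Jensen plus the actual coupling to get $\mathbb E\,\mathsf d_L^2\ge W_L^2$, quadratic empirical-risk growth, and an additive $2\epsilon_{n,L}$ calibration term. The one deviation is in the risk-convergence claim, where no Lipschitz hypothesis is needed (nor could one be derived): the paper assumes only that $R$ is bounded and continuous under $\mathsf d_L$, and since $W_L$-convergence implies weak convergence, this already gives convergence of the expected risk.
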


\emph{Messages from Theorem~\ref{thm:q3}.}
The theorem answers Q3 by showing how online harness updates can stabilize while retaining a positive optimization gap. Even as risk-estimation error vanishes, continued iterations need not escape a suboptimal stationary regime. This obstruction can arise under different update recipes satisfying the stated assumptions, while allowing different trajectories and limiting risks. Such dynamics provide a mechanism consistent with the performance plateaus and persistent oracle gaps in Figure~\ref{fig:q3-opt}. The lower bound highlights how changing the update recipe can leave a common source of error unresolved, revealing the inherent bottleneck of self-evolving algorithms. Harness design should therefore target these sources of update bias through more informative feedback, grounded correction signals, and independently validated memory edits, aligning the update dynamics with the preference objective.

\section{Conclusion}
\label{sec:conclusion}
We studied how harness design and evolution shape personalization around a frozen foundation model. Experiments on AppWorld-P reveal that preference compliance depends on the harness mechanisms available, does not improve monotonically with memory size, and can remain below oracle performance despite repeated updates. Our learning-theoretic formulation connects these observations to distinct limitations in approximation, generalization, and optimization, providing conditional explanations for when and why adaptation falls short. These findings suggest that reliable personalization requires jointly considering what a harness can express, how its capacity matches the available evidence, and how feedback guides its updates. This perspective provides a unified account of diverse harness-engineering practices by identifying which error components they seek to reduce, supporting both failure diagnosis and more principled design in future research. 

\section*{AI use statement}

We used OpenAI Codex and Anthropic Claude Code to assist with core code implementation, data construction and processing, evaluation, interpretation of results, translation, and the checking and refinement of mathematical statements and proofs. Additionally, we used these tools for literature search and synthesis, manuscript drafting and editing, and figure preparation. The initial research ideas, conceptual framework, hypotheses, proof strategies, and experimental design were developed by the authors without AI assistance. The authors reviewed all AI-assisted work, including the text, references, mathematical arguments, code, data and evaluation procedures, and reported results and figures. We take full responsibility for the final content of this work, including all text, claims, and artifacts produced with the assistance of generative AI. 

\section*{Ethics Statement}

This work studies personal-agent adaptation in the simulated application environments of AppWorld-P using explicitly specified preference rules. The evaluated messaging, payment, and other application actions take place within the benchmark environment. Our aim is to understand limitations that can undermine reliable personalization. In practical deployments, persistent memories may contain sensitive information, and incorrectly inferred preferences may lead to unintended actions. Responsible deployment therefore requires user consent, protection of stored preferences and interaction logs, and appropriate oversight of consequential actions. Our findings motivate assessing preference compliance alongside privacy, safety, and user control.

\section*{Reproducibility Statement}

Section~\ref{sec:main-results} describes the benchmark, evaluation metric, and experimental comparisons. Appendix~\ref{sec:detailed-exp-setting} details the preference rules, task selection, memory construction, feedback channels, model configuration, seeds, and training and evaluation protocols. The assumptions and proofs supporting our theoretical claims are provided in Appendices~\ref{app:proof-q1}--\ref{app:proof-q3}. The code is open-sourced via an anonymous GitHub repository listed at the end of the abstract. 



\bibliography{reference}
\bibliographystyle{iclr2027_conference}

\appendix
\section{Detailed Experimental Settings}
\label{sec:detailed-exp-setting}
This section provides the complete experimental protocol underlying the three main questions. All experiments use AppWorld-P, our preference-evaluation layer over AppWorld~\citep{trivedi-etal-2024-appworld}. AppWorld itself is left unchanged: each task is executed in its original application world, while AppWorld-P attaches a persona consisting of programmatic preference rules. A rule specifies (i) when it is applicable, (ii) whether the recorded API calls satisfy the preference, and (iii) the feedback exposed to the harness after a training episode. Rule semantics and checker state are never exposed to the agent except through the arm-specific channel described below. 

Across all experiments, the foundation model is
\texttt{claude-haiku-4-5-20251001}~\citep{claude_haiku-4-5}. We use a constrained function-call actuator: at each agent step, the model may emit exactly one API call with literal arguments. This restriction prevents the model from outsourcing arithmetic or aggregation to an interpreter and makes the distinction between contextual and computational harnesses well defined. Unless stated otherwise, the model temperature is $0.7$, the per-task step budget is 50, and evaluation uses a frozen task set. The primary metric is the micro-averaged preference violation rate,
\begin{equation}
    \widehat V
    =\frac{\sum_{i,r}\mathbf{1}[r\text{ is applicable in episode }i]
        \mathbf{1}[r\text{ is violated in episode }i]}
      {\sum_{i,r}\mathbf{1}[r\text{ is applicable in episode }i]}.
    \label{eq:appendix-violation-rate}
\end{equation}
Episodes in which the constrained action is not taken are not counted as either a success or a violation for that rule. 
Where confidence intervals are reported, we use Wilson 95\% intervals. 

Concretely, a persona is a small declarative bundle of rules rather than a free-form prompt. Each rule names the API action it constrains, the field or state variable it checks, and a natural-language statement used only by the corresponding context arm. For example, the Q1 persona can be represented as: 
\begin{quote}
\small\texttt{name: q1\_format\_checksum}\\
\texttt{rules: [private\_note\_format, sms\_char\_checksum]}\\
\texttt{private\_note\_format.trigger: venmo.create\_transaction}\\
\texttt{sms\_char\_checksum.trigger: phone.send\_text\_message}
\end{quote}
The checker decides whether each rule is applicable from the recorded API calls and then evaluates the resulting note or message. Thus the same persona can be attached to different harness arms without changing the task world or the model. 

\subsection{Detailed Experimental Settings for Q1}
\label{sec:detailed-Q1-setting}
\paragraph{Objective and preference selection.}
Q1 tests whether changing the harness implementation changes the set of preferences that the fixed model can satisfy. We first calibrate candidate rules using two static conditions: no memory and a single, maximally explicit natural-language description of the preference. A rule is labeled \emph{in-support} when the stated description yields near-perfect compliance and the no-memory condition does not; it is labeled \emph{out-of-support} when a substantial violation rate remains despite the perfect description. These labels are determined empirically rather than from the rule's source file or our prior judgment.

The final experiment contains five preferences (Table~\ref{tab:q1-rules}). The two in-support rules test formatting and sign-off instructions that the model can implement once stated. The three out-of-support rules cover complementary failure modes: exact computation on the current action, statistical inference from an interaction history, and exact aggregation across both past and within-episode actions.

\begin{table}[h]
\centering
\caption{Preferences used in Q1. The family assignment is determined by the
perfect-description calibration.}
\label{tab:q1-rules}
\resizebox{\hsize}{!}{
\begin{tabular}{llll}
\hline
\textbf{Preference}    & \textbf{Family} & \textbf{Trigger} & \textbf{Requirement}                                                                                                                                       \\ \hline
Private note format    & In-support      & Venmo payment    & \begin{tabular}[c]{@{}l@{}}The note follows a private bracketed format; its code is determined\\ by the payment amount.\end{tabular}                       \\
\rowcolor[HTML]{EFEFEF} 
SMS sign-off           & In-support      & Text message     & The message ends with the user’s first name.                                                                                                               \\
SMS character checksum & Out-of-support  & Text message     & \begin{tabular}[c]{@{}l@{}}The message ends with a checksum equal to its character count\\ modulo seven.\end{tabular}                                      \\
\rowcolor[HTML]{EFEFEF} 
Habitual card          & Out-of-support  & Venmo payment    & \begin{tabular}[c]{@{}l@{}}The first attempted payment card is the user’s latent habitual bank\\ card.\end{tabular}                                        \\
Running spend total    & Out-of-support  & Venmo payment    & \begin{tabular}[c]{@{}l@{}}The note ends with the cumulative amount successfully sent to that\\ recipient, including the current payment and earlier successful ones.\end{tabular} \\ \hline
\end{tabular}
}
\end{table}

\paragraph{Harness conditions.}
The five columns in Figure~\ref{fig:q1-app} compare contextual specifications with two forms of execution control:
\begin{enumerate}
    \item \textbf{No memory} injects no personalization context and provides the unpersonalized reference.
    \item \textbf{Stated} injects the rule's clearest natural-language description. This removes preference-induction error and provides the principal context-class
    reachability reference.
    \item \textbf{From history} asks the model to use interaction-derived context. For the four rules in the main AppWorld streams, a top-$L$ assertion updater selects three statements from a bank containing the two persona rules and six distractors, using corrective feedback over the training stream specified below. For the habitual-card rule, this condition contains the adoptable rule statement and the complete raw habit log; no statistic is precomputed.
    \item \textbf{Checker rejects} combines the stated rule with up to three whole-episode attempts. A programmatic checker accepts the first attempt with an applicable action and no preference violation; otherwise it supplies rejection feedback for the next attempt. This channel restates the constraint without returning the computed target value. For habitual card, it also identifies the rejected card and accumulates exclusions across attempts. 
    \item \textbf{Harness computes} supplies preference-specific external computation. For character checksum, the checker returns the expected value for the previous message to the model, with up to three fresh attempts; it does not directly rewrite the suffix. The habitual-card harness maintains per-card acceptance rates and injects their argmax. The running-total harness maintains a ledger, writes the computed total into the API's actual \texttt{description} field before execution, and updates the ledger only after successful payment. For the two in-support rules, the value-returning configuration adds no target computation.
\end{enumerate}
The first three conditions are context harnesses. The last two are control harnesses: \emph{Checker rejects} adds a lightweight verification-and-retry gate, whereas \emph{Harness computes} supplies derived values or rewrites action arguments. Although rejection feedback is textual, the external checker also controls retry and trajectory selection. These conditions compare mechanisms, not equal inference budgets.

The stated arm inserts the rule's \texttt{oracle\_text} verbatim. A representative prompt fragment is:
\begin{promptbox}{Stated Harness: Oracle Preference Context}
My preferences for how you should do things:\\
1. Venmo payment descriptions must be exactly [<code>|<date>|<summary>] where <code> is K7 if the whole-dollar part of the amount is even and Q3 if it is odd, <date> is today's date as YYYY-MM-DD, and <summary> is at most 6 words.\\
2. Text messages you send for me must end with my first name as a sign-off.
\end{promptbox}
The reject-only arm instead returns a bounded failure message. For a habitual-card attempt, an illustrative instance is:
\begin{examplebox}{Checker-Rejection Example}
\texttt{Automated constraint checker REJECTED the previous attempt.}\\
\texttt{You paid with the Chase card, which is not the one -- pick a different card next time.}\\
\texttt{Redo the task so the constraint passes.}
\end{examplebox}
This reports the card just attempted, but never the latent habitual bank. Other rules use the same reject-only principle: the checker may identify the failed attempt or restate the constraint, but it does not return the value that must be computed.

\paragraph{Personas, streams, and sample sizes.}
The five panels combine three experiments because several rules claim incompatible fields and cannot coexist in one persona. The main persona contains private note format and SMS character checksum, with eight training tasks, seven frozen evaluation tasks, six seeds, and a 50-step budget per attempt. The crossover persona contains SMS sign-off and running spend total, reversing the easy/hard assignment across the two applications; it uses six training tasks, six frozen evaluation tasks, three seeds, and a 60-step budget per attempt. Static arms are evaluated once per seed; history-derived results use the final checkpoint, at $n=8$ and $n=6$, respectively.
The habitual-card experiment uses a synthetic stream of binary-feedback micro-interactions followed by six fixed AppWorld evaluation tasks. Each micro-interaction records the card used and whether the user accepted it, but not which card the user would have preferred. The latent preference is a categorical distribution whose mode is the habitual card. We evaluate after 0, 20, 60, and 120 observations and rotate the mode across different candidates. The headline figure reports $n=20$, while the full schedule diagnoses the context method's sample efficiency. The per-attempt step budget is 30.


\paragraph{Scoring and controls.}
All arms within each experiment see the same evaluation tasks; applicable-episode counts can differ with the actions actually executed. An attempt with no applicable persona rule is not accepted by the verifier. If its retry budget is exhausted, the final attempt is retained unless it contains no applicable action, in which case the last acted attempt is preferred. For habitual card, the first card attempted is scored, since later cards may reflect an insufficient-balance fallback. The non-leaking reject-only arm is also compared with a card-cycling elimination baseline over the 4--5 available cards. For running total, the checker and computational harness start from the same read-only transaction ledger. Only successful payments contribute to the cumulative amount or make this rule applicable; failed calls do neither. Each successful payment is checked against the cumulative total including that payment, using the actual \texttt{description} field and validating the recorded outcome against the transaction identifier and database. For checksum, returning the expected value still leaves the model responsible for composing a valid final message. 

\subsection{Detailed Experimental Settings for Q2}
\label{sec:detailed-Q2-setting}
\paragraph{Objective and scan set.}
Q2 studies the relationship between harness scale and preference compliance by varying the number $L$ of injected memory lines. The adopted persona contains eight preferences spanning text messages, Venmo payments, and Spotify actions: SMS sign-off, SMS greeting, SMS terseness, payment-note presence, payment-note lowercasing, single-word payment notes, private Venmo transactions, and saving songs to a playlist rather than liking them individually. The details are listed in Table~\ref{tab:q2-rules}. Every rule is represented by text induced by the same learner from instance-level feedback; the memory is not constructed from hand-written oracle statements. 
The eight-rule set was chosen using induced-text fidelity: a candidate was retained only when the learner-generated statement described a behavior that could, in principle, affect the corresponding checker. 

\begin{table}[h]
\centering
\small
\caption{Preferences administered in the Q2 memory-length sweep. All eight rules are included in the primary violation metric.}
\label{tab:q2-rules}
\resizebox{\hsize}{!}{
\begin{tabular}{lll}
\hline
\textbf{Preference} & \textbf{Trigger} & \textbf{Requirement} \\ \hline
SMS sign-off & Text message & End with the user's first name. \\
\rowcolor[HTML]{EFEFEF}
SMS greeting & Text message & Begin with a greeting word. \\
SMS terseness & Text message & Use at most five words. \\
\rowcolor[HTML]{EFEFEF}
Payment has note & Venmo payment & Include a non-empty description. \\
Payment-note lowercase & Venmo payment & Write the note entirely in lowercase. \\
\rowcolor[HTML]{EFEFEF}
Single-word payment note & Venmo payment & Use exactly one word. \\
Venmo private & Venmo payment & Mark the transaction private. \\
\rowcolor[HTML]{EFEFEF}
Playlist over like & Spotify save & Add songs to a playlist instead of liking individually. \\ \hline
\end{tabular}
}
\end{table}

\paragraph{Learner-generated assertion pool.}
Before the length sweep, we collect assertions by running the learner for 32 episodes using instance-level complaints. During collection, the injected memory is empty, so every assertion is a function of the observed task trajectory and feedback rather than an earlier memory. We collect two independent pool seeds, deduplicate exact repeats, group assertions by their generating rule, and rank within each group by consensus with other learner paraphrases. Consensus ranking uses only the generated text and never the oracle rule statement. A round-robin construction then cycles across rule groups, which makes coverage increase as evenly as possible as $L$ grows. 

The collection prompt is instantiated once per violated rule. It receives the task, the write actions, and the episode-specific complaint, but not the rule name or its oracle statement:
\begin{promptbox}{Assertion-Pool Induction Prompt}
You maintain a long-term memory of one user's personal preferences for how their assistant should do things. The user has just complained about something the assistant did. From that complaint, write the standing preference you will store in memory. \\
Task the user asked for: <instruction>\\
What the assistant did: <API writes>\\
What the user said about it: <complaint>\\
Output exactly one sentence as a general user preference; output only the sentence.
\end{promptbox}
For example, the complaint \texttt{sms not signed with first name: 'Please get on venmo.' -- I don't want it done that way.} produced the learner assertion ``Text messages should be signed with your first name.'' in one collection seed. A different seed produced ``When sending text messages on behalf of the user, they should be signed with the user's first name.'' These paraphrases remain separate pool entries and are later selected by the length sweep.

\paragraph{Primary length sweep.}
The primary R-arm contains only assertions about the eight scored preferences. We sweep
\begin{equation}
    L\in\{0,1,2,3,5,10,20,60,150\}.
\end{equation}
When $L$ exceeds the number of distinct learner assertions, the finite pool is recycled. 
For each $(L,\text{seed})$ cell, the selected lines are shuffled deterministically using the evaluation seed and $L$. 

We run three evaluation seeds. Each cell contains 12 frozen tasks with 10 fresh rollouts per task, for 120 episodes per cell and 3,240 episodes over the $9\times3$ primary grid. The task pool is balanced so that each of the eight rules has four triggering tasks per cell before behavioral inapplicability. 

\subsection{Detailed Experimental Settings for Q3}
\label{sec:detailed-Q3-setting}
\paragraph{Objective and persona construction.}
Q3 tests whether a self-evolving harness reaches the oracle context when the model, actuator, task distribution, and preference set are fixed. We use a persona containing five jointly satisfiable preferences over phone and Venmo actions (Table~\ref{tab:q3-rules}). The set is designed to separate preferences whose target is carried by the feedback channel from those whose target literal is withheld. This distinction is checked mechanically by executing each rule and comparing the quoted literals in its oracle statement with the checker detail available to the updater.

\begin{table}[h]
\centering
\small
\caption{Preferences used in Q3 and whether instance-level feedback carries the target
needed to reconstruct the oracle statement.}
\label{tab:q3-rules}
\resizebox{\hsize}{!}{
\begin{tabular}{lll}
\hline
\textbf{Preference}   & \textbf{Feedback role} & \textbf{Requirement}                                                                   \\ \hline
SMS greeting          & Carried                & Begin with a greeting word.                          \\
\rowcolor[HTML]{EFEFEF} 
SMS sign-off          & Carried                & End a text message with the user’s first name.                                         \\
Venmo private         & Carried                & Mark a Venmo transaction private.                                                      \\
\rowcolor[HTML]{EFEFEF} 
Payment-note category & Target withheld        & Begin the note with the literal category tag \texttt{{[}personal{]}}. \\
Payment-note initials & Target withheld        & End the note with parenthesized initials such as \texttt{(J.D.)}.     \\ \hline
\end{tabular}
}
\end{table}
The category and initials complaints identify the defective field but never reveal the literals \texttt{[personal]} or \texttt{(J.D.)}. By contrast, the two SMS complaints identify the required form. 
Joint satisfiability is verified by executing all five checkers on a constructed episode, not only by comparing their prose descriptions. 

For example, one training episode attempted the SMS ``Done.'' and a Venmo note
``Grocery Bill.'' Its instance-level feedback was:
\begin{examplebox}{Instance-Level Feedback Example}
\texttt{That's not how I like things done:}\\
\texttt{- note lacks the category tag: 'Grocery Bill' -- I don't want it done that way.}\\
\texttt{- note lacks initials suffix: 'Grocery Bill' -- I don't want it done that way.}\\
\texttt{- venmo transaction not marked private -- I don't want it done that way.}\\
\texttt{- sms lacks an opening greeting: 'Done.' -- I don't want it done that way.}\\
\texttt{- sms not signed with first name: 'Done.' -- I don't want it done that way.}
\end{examplebox}
The updater sees this text, while the correspondence between each complaint line and its checker rule is retained only as driver-side provenance for scoring and analysis. In particular, the first two complaints identify the defective note field but do not disclose the required category tag or the parenthesized initials format. 

\paragraph{Interaction stream and evaluation split.}
Each evolving run contains 24 on-policy training episodes. The current harness is injected into the agent at the start of an episode; after task execution, the programmatic checkers produce instance-level feedback, the updater modifies the harness, and only the next episode observes the new state. The main checkpoints are $n\in\{0,6,12,18,24\}$. At each checkpoint, the harness is frozen and evaluated on six tasks with three independent rollouts per task, yielding 18 evaluation episodes. 
We use three seeds. 
The training and evaluation splits are disjoint at the task-template-family level. 
This design tests whether an induced preference transfers across task templates rather than whether the updater memorizes a specific task. 

\paragraph{References and update recipes.}
The no-memory arm provides the initial behavioral ceiling. The oracle arm injects the five clearest rule statements. It is the reachability reference against which the final self-evolution gap is measured. The primary evolving method is an \textbf{ACE}-style updater with itemized bullets, model-proposed add/modify deltas, deterministic code-side merging, trigram-based semantic deduplication, and a capacity of at most 12 bullets. 
We compare the primary trajectory with three admissible self-evolution recipes under the same supervision: \textbf{TEPA} stores keyed precedents, replaces the active precedent under the same key, and retains revoked entries for audit; \textbf{TRACE} compiles feedback into constrained runtime predicates and permits up to three self-gated attempts; and \textbf{Reflexion} appends natural-language reflections to a FIFO buffer of width three. TRACE gates against its own learned checks, never the ground-truth persona checker, so it receives no privileged label. 
The Reflexion, TEPA, and TRACE updater prompts remove authentication-only calls and redact credential values from action trajectories. For mechanism diagnostics, we also run a naive full-memory rewrite updater and a corrective-feedback rewrite updater. 

\paragraph{Baseline adaptations.}
We adapt the memory-update and enforcement mechanisms of four methods to the shared AppWorld-P protocol. All four use the same foundation model, actuator, persona, task-stream construction, feedback protocol, and evaluation checkpoints, but update from their own on-policy outcomes. These are mechanism-level adaptations rather than reproductions of the complete original systems. TEPA, TRACE, and Reflexion receive summaries of the current episode's API calls with authentication calls omitted and credential values redacted; ACE receives the current memory and feedback.

\noindent\textbf{ACE.}
Our ACE-style updater~\citep{ICLR2026_ACE} maintains identified preference bullets and asks the model to propose JSON \texttt{add}/\texttt{modify} deltas, which are merged deterministically into the existing memory. After each update, character-trigram vectors identify near-duplicate entries using a cosine-similarity threshold of $0.85$; if more than 12 bullets remain, the oldest are removed. All retained bullets are injected into subsequent episodes. This preserves itemized, incremental updating. 

\medskip
\noindent\textbf{TEPA.}
Our TEPA adaptation~\citep{zhou2026teparevokingstalememories} extracts keyed precedents from the current action summary and feedback, using a supplied attribute vocabulary such as \texttt{sms.ending} and permitting additional keys. Each key has at most one active precedent: a new, nonidentical entry immediately revokes the previous entry, while an exact match to an archived entry can reactivate it. Revoked entries remain archived, and all active entries are injected without task-dependent retrieval. We retain keyed replacement and explicit revocation, but omit evidence-counting, posterior-threshold lifecycle transitions, and the trial-validation stage; the experiment therefore tests this simplified mechanism under fixed user preferences.

\medskip
\noindent\textbf{TRACE.}
Our TRACE adaptation~\citep{zhou2026gettingbetterworkingyou} compiles complaints into checks over API arguments using string equality, prefix, suffix, and containment predicates, including negated prefix, suffix, and containment tests. It retains at most 12 checks and injects their natural-language instructions into context. After an episode, the learned checks inspect the recorded calls; failures produce feedback for a fresh-world retry, with at most three total attempts in both training and evaluation. Unlike the original event-level hooks, this gate operates after a complete episode and uses neither semantic verifiers nor the full rule-resolution lifecycle; it never accesses the ground-truth persona checker, and the final attempt is scored even if the learned checks still fail.

\medskip
\noindent\textbf{Reflexion.}
Our Reflexion adaptation~\citep{NEURIPS2023_Reflexion} generates a one- or two-sentence reflection from the current action summary and complaint after a rejected training episode. Reflections are appended to a FIFO buffer retaining the latest three entries, all of which are injected into subsequent tasks. Previous reflections inform task execution but are not inputs to the reflection-generation prompt, and entries are neither merged nor organized by preference. We thus use bounded verbal memory for cross-task adaptation, without a dedicated same-task reflection-and-retry loop.

\paragraph{Feedback and capacity controls.}
For each violated rule, the updater receives a complaint constructed uniformly from the checker's episode-specific detail. In particular, it may say that a payment note lacks a category tag while withholding which tag is required. Accepted episodes do not trigger an update, avoiding memory drift without new evidence. The corrective diagnostic supplies a fully quantified correction and therefore tests whether a plateau is caused by the information channel rather than the updater alone. 

The four main methods share the feedback-generation protocol and evaluation checkpoints. Their state archives include memory size, update counts, parse failures, and mechanism-specific diagnostics such as ACE order fingerprints, TEPA revocations, TRACE checks fired/failed, and Reflexion evictions. 

Taking the primary ACE as an example, the complaint is inserted into a delta-update prompt rather than a full-memory rewrite:
\begin{promptbox}{ACE Delta-Update Prompt}
You maintain a memory of the user's preferences as itemized bullets.\\
Current memory: <current bullets>\\
Latest feedback from the user: <feedback>\\[2pt]
Propose changes as a JSON array of objects with fields:\\
action = add, text = ..., reason = ...; or\\
action = modify, id = ..., text = ..., reason = ... .\\[2pt]
Each text is one sentence written as a user preference. Output only valid JSON.
\end{promptbox}
For example, a valid delta may add ``Text messages should end with my first name.'' The code-side updater assigns a stable identifier, deduplicates semantically similar bullets, and prunes only when the 12-bullet capacity is exceeded.

\newpage
\newcommand{\Risk}{\mathcal{R}}
\newcommand{\Hctx}{\mathcal{H}^{\mathrm{ctx}}_f}
\newcommand{\Hctrl}{\mathcal{H}^{\mathrm{ctrl}}_f}
\newcommand{\Mbase}{\mathcal{M}_f}
\newcommand{\Cctx}{\mathcal{C}_{\mathrm{ctx}}}
\newcommand{\Cctrl}{\mathcal{C}_{\mathrm{ctrl}}}
\newcommand{\dist}{\operatorname{dist}}
\newcommand{\conv}{\operatorname{conv}}

\section{Proofs}
\subsection{Proof of Theorem~\ref{thm:q1}}\label{app:proof-q1}

\begin{assumption}\label{assump:context_latent}
    For any context harness $c$, (i) the trace distribution is conditionally invariant to context given a concept $p(\tau\mid z,c,\theta)=p(\tau\mid z,\theta)$, and (ii) the concept distribution is fixed by the context and independent of the test task $p(\theta\mid z,c)=p(\theta\mid c)$.
\end{assumption}

\begin{remark}
    Assumption~\ref{assump:context_latent} formalizes the view that a context harness describes user-specific preferences through latent concepts already represented by the frozen model. Once a concept $\theta$ is specified, the context provides no additional effect on the resulting trace distribution. Meanwhile, the context determines the mixture over these latent concepts, while the test task $z$ only governs how a selected concept is expressed in the current interaction.
\end{remark}

\begin{proof}
    We introduce a latent-concept representation of the frozen model. Let $\theta$ denote a behavioral concept represented by $f$, with pretrained prior $p(\theta)$. The frozen policy is represented as
    \begin{equation}\label{eq:pi-f}
        \pi_f(\tau\mid z)=\int p(\tau\mid z,\theta)p(\theta)\,d\theta.
    \end{equation}
    Let $\mathcal C_{\mathrm{ctx}}$ and $\mathcal C_{\mathrm{ctrl}}$ denote the context and control harness classes, respectively. For any $c\in\mathcal C_{\mathrm{ctx}}$, marginalizing over $\theta$ and applying Assumption~\ref{assump:context_latent} gives
    \begin{equation}\label{eq:pi-context}
        \pi_{c\circ f}(\tau\mid z)=\int p(\tau\mid z,c,\theta)p(\theta\mid z,c)\,d\theta=\int p(\tau\mid z,\theta)p(\theta\mid c)\,d\theta.
    \end{equation}
    Define the pretrained policy hull as $\mathcal M_f:=\overline{\operatorname{conv}}\{p(\cdot\mid\cdot,\theta):\theta\in\operatorname{supp}p(\theta)\}$. Since $p(\theta\mid c)$ is a probability distribution, $\pi_{c\circ f}$ is a convex mixture of the pretrained concept-conditioned trace distributions and therefore $\pi_{c\circ f}\in\mathcal M_f$ for every $c\in\mathcal C_{\mathrm{ctx}}$. Hence, defining $\mathcal R_{\mathrm{ctx}}:=\{\pi_{c\circ f}:c\in\mathcal C_{\mathrm{ctx}}\}$, we obtain $\mathcal R_{\mathrm{ctx}}\subseteq\mathcal M_f$.

    For a control harness $c\in\mathcal C_{\mathrm{ctrl}}$, let $K_c(\bar\tau\mid\tau,z)$ denote the intervention kernel that maps a model-proposed trace $\tau$ to an executed trace $\bar\tau$. For any policy $\pi$, define the induced control operator as
    \begin{equation}\label{eq:Tc}
        (\mathcal T_c\pi)(\bar\tau\mid z):=\int K_c(\bar\tau\mid\tau,z)\pi(\tau\mid z)\,d\tau.
    \end{equation}
    Since $K_c(\cdot\mid\tau,z)$ is a probability distribution for every $(\tau,z)$, $\mathcal T_c\pi$ is a valid policy distribution. For each pretrained concept $\theta$, define the control-transformed concept-conditioned trace distribution as
    \begin{equation}\label{eq:q-control}
        q(\bar\tau\mid z,c,\theta):=\int K_c(\bar\tau\mid\tau,z)p(\tau\mid z,\theta)\,d\tau.
    \end{equation}
    Applying the control operator directly to the frozen policy and substituting Equation~\ref{eq:pi-f} gives
    \begin{equation}\label{eq:pi-control}
        (\mathcal T_c\pi_f)(\bar\tau\mid z)=\int K_c(\bar\tau\mid\tau,z)\left[\int p(\tau\mid z,\theta)p(\theta)\,d\theta\right]d\tau=\int q(\bar\tau\mid z,c,\theta)p(\theta)\,d\theta.
    \end{equation}
    Thus, a context harness changes the mixture weights over fixed pretrained concept-conditioned trace distributions, while a control harness can transform the component distributions themselves from $p(\tau\mid z,\theta)$ to $q(\bar\tau\mid z,c,\theta)$ through action-path intervention.

    Define the control-reachable policy set as $\mathcal R_{\mathrm{ctrl}}:=\{\mathcal T_c\pi_f:c\in\mathcal C_{\mathrm{ctrl}}\}$. Unlike $\mathcal R_{\mathrm{ctx}}$, the set $\mathcal R_{\mathrm{ctrl}}$ is not restricted to convex reweightings of the pretrained concept-conditioned trace distributions. In particular, if there exists $c\in\mathcal C_{\mathrm{ctrl}}$ such that $\mathcal T_c\pi_f\notin\mathcal M_f$, then $\mathcal R_{\mathrm{ctrl}}\not\subseteq\mathcal M_f,$ and $\mathcal R_{\mathrm{ctx}}\subseteq\mathcal M_f$.

    Finally, consider a user-optimal policy $\pi_{f_u^*}\in\mathcal R_{\mathrm{ctrl}}\setminus\mathcal R_{\mathrm{ctx}}$ whose risk is separated from the context-reachable class by
    \begin{equation*}
        \Delta_u:=\inf_{\pi\in\mathcal R_{\mathrm{ctx}}}R(\pi)-R(\pi_{f_u^*})>0.
    \end{equation*}
    Then $\epsilon_{\mathrm{app}}(\mathcal C_{\mathrm{ctx}},f_u^*)=\Delta_u>0.$
    Since $\pi_{f_u^*}\in\mathcal R_{\mathrm{ctrl}}$ and $\pi_{f_u^*}$ is user-optimal,
    \begin{equation*}
        \epsilon_{\mathrm{app}}(\mathcal C_{\mathrm{ctrl}},f_u^*)=\inf_{\pi\in\mathcal R_{\mathrm{ctrl}}}R(\pi)-R(\pi_{f_u^*})=0.
    \end{equation*}
    Hence, for user-optimal policies that are unreachable by context-only reweighting but reachable through control intervention, the control class achieves a strictly smaller approximation error.
\end{proof}

\newpage
\subsection{Proof of Theorem~\ref{thm:q2}}\label{app:proof-q2}
Let $U=(U_1,\ldots,U_d)$ denote the user's $d$ independent binary preferences, with $U_j\sim\mathrm{Bernoulli}(1/2)$. For a chosen harness-capacity budget $L$, let $\mathcal C_L$ denote the class of context harnesses containing at most $L$ memory statements. Let $c_{\mathcal C_L}^*\in\arg\min_{c\in\mathcal C_L}R(\pi_{c\circ f})$ denote the population-optimal harness in $\mathcal C_L$, and let $\hat c_{n,L}\in\mathcal C_L$ denote the harness learned from the interaction stream $S_n$. For the theoretical analysis, each fresh task $z$ activates a subset $J(z)\subseteq[d]$ of the preferences, and evaluation is over applicable task--preference pairs $(z,j)$ with $j\in J(z)$, with each preference appearing with probability $1/d$. The loss $\ell_u$ is the indicator that preference $j$ is violated.

\begin{proof}
    We bound the generalization error induced by the finite interaction stream. Each memory statement is associated with a preference that it is intended to represent. Let $m=\lfloor n/L\rfloor$ denote the number of feedback observations allocated to each memory statement. For each statement $\ell\in[L]$, let $X_{\ell,1},\ldots,X_{\ell,m}\in\{0,1\}$ be independent binary feedback observations associated with the preference represented by statement $\ell$, where $X_{\ell,i}=1$ if the $i$-th observation agrees with the corresponding preference, and $\Pr(X_{\ell,i}=1)=1/2+\gamma$ for some $0<\gamma\leq1/2$. Here, $\gamma$ measures the advantage of the feedback over random guessing.
    
    For each statement $\ell\in[L]$, let $E_\ell$ denote the event that the correct preference does not receive a strict majority among its associated feedback observations:
    \begin{align*}
    E_\ell:=\left\{\frac{1}{m}\sum_{i=1}^m X_{\ell,i}\leq\frac{1}{2}\right\}.
    \end{align*}
    Since $\mathbb E[X_{\ell,i}]=1/2+\gamma$, we have
    \begin{align*}
    E_\ell=\left\{\frac{1}{m}\sum_{i=1}^m X_{\ell,i}-\mathbb E[X_{\ell,i}]\leq-\gamma\right\}.
    \end{align*}
    Thus, by Hoeffding's inequality,
    \begin{align*}
    \Pr(E_\ell)\leq\exp(-2m\gamma^2)=\exp\!\left(-2\gamma^2\left\lfloor\frac{n}{L}\right\rfloor\right).
    \end{align*}
    
    Let $E:=\bigcup_{\ell=1}^L E_\ell$. By the union bound,
    \begin{align*}
    \Pr(E)\leq\sum_{\ell=1}^L\Pr(E_\ell)\leq L\exp\!\left(-2\gamma^2\left\lfloor\frac{n}{L}\right\rfloor\right).
    \end{align*}
    On the complement event $E^c$, all target statements are inferred correctly, so $\hat c_{n,L}$ and $c_{\mathcal C_L}^*$ induce identical preference behavior and hence incur the same loss. Therefore, any increase in risk of $\hat c_{n,L}$ relative to $c_{\mathcal C_L}^*$ can occur only on $E$. Since $\ell_u\in\{0,1\}$, taking expectations over $S_n$ gives
    \begin{equation}
    \begin{aligned}
    \mathbb E_{S_n}\!\left[\epsilon_{\mathrm{gen}}(\mathcal C_L,n)\right]
    &=\mathbb E_{S_n}\!\left[R(\pi_{\hat c_{n,L}\circ f})\right]-R(\pi_{c_{\mathcal C_L}^*\circ f})\\
    &\leq\Pr(E)\leq L\exp\!\left(-2\gamma^2\left\lfloor\frac{n}{L}\right\rfloor\right).
    \end{aligned}
    \label{eq:thm2-gen-upper-bound}
    \end{equation}
    For fixed $n$ and $\gamma$, $\lfloor n/L\rfloor$ is nonincreasing in integer $L$, so the exponential factor is nondecreasing. Since the prefactor $L$ is strictly increasing, the generalization bound is increasing in integer $L$.
\end{proof}

\subsection{Proof of Corollary~\ref{cor:q2-excess-risk}}\label{app:proof-q2-cor}
    \begin{proof}
    Consider the information-theoretic framework, where $H(\cdot)$ denotes the information entropy and $I(\cdot;\cdot)$ denotes the mutual information. 
    Since the $d$ preference bits are independent and unbiased, $H(U)=d$. Since every harness in $\mathcal C_L$ contains at most $L$ memory statements of at most $b$ bits each, the population-optimal harness $c_{\mathcal C_L}^*$ satisfies
    \begin{align*}
    I(U;c_{\mathcal C_L}^*)\leq H(c_{\mathcal C_L}^*)\leq Lb.
    \end{align*}
    Therefore,
    \begin{align*}
    H(U\mid c_{\mathcal C_L}^*)=H(U)-I(U;c_{\mathcal C_L}^*)\geq d-Lb.
    \end{align*}
    For each $j\in[d]$, let $p_j^*:=\inf_{\phi_j}\Pr[\phi_j(c_{\mathcal C_L}^*)\neq U_j]$ be the minimum probability of incorrectly recovering $U_j$ from the population-optimal harness, and define $p_L^*:=1/d\cdot\sum_{j=1}^d p_j^*$. By binary Fano's inequality,
    \begin{align*}
    H(U_j\mid c_{\mathcal C_L}^*)\leq h_2(p_j^*),
    \end{align*}
    where $h_2(p)=-p\log_2p-(1-p)\log_2(1-p)$ is the binary entropy function. Using subadditivity of conditional entropy and concavity of $h_2$,
    \begin{align*}
    H(U\mid c_{\mathcal C_L}^*)\leq\sum_{j=1}^dH(U_j\mid c_{\mathcal C_L}^*)\leq\sum_{j=1}^dh_2(p_j^*)\leq d\,h_2(p_L^*).
    \end{align*}
    Combining the lower and upper bounds on $H(U\mid c_{\mathcal C_L}^*)$ gives $h_2(p_L^*)\geq1-Lb/d$. Since $h_2(p_L^*)\geq0$, this yields
    \begin{align*}
    h_2(p_L^*)\geq\left[1-\frac{Lb}{d}\right]_+.
    \end{align*}
    Since an incorrectly represented applicable preference induces a violation, while the user-optimal policy incurs zero violation risk on these binary preferences,
    \begin{align*}
    \mathbb{E}_U\left[\epsilon_{\mathrm{app}}(\mathcal C_L,f_u^*)\right]=\mathbb{E}_U\left[R(\pi_{c_{\mathcal C_L}^*\circ f})-R(\pi_{f_u^*})\right]\geq p_L^*.
    \end{align*}
    Since $p_L^*\in[0,1/2]$ and $h_2$ is increasing on $[0,1/2]$, we obtain
    \begin{equation}
    \mathbb{E}_U[\epsilon_{\mathrm{app}}(\mathcal C_L,f_u^*)]\geq h_2^{-1}\!\left(\left[1-\frac{Lb}{d}\right]_+\right).
    \label{eq:thm2-lower-bound}
    \end{equation}
    
    We next upper bound the approximation error by constructing a harness in $\mathcal C_L$. Let $c_L^{\mathrm{exp}}\in\mathcal C_L$ store the correct values of $\min\{L,d\}$ distinct preferences, one per memory statement. Since each of the $d$ preferences is equally likely to appear in the evaluation, the probability that the evaluated preference is not represented in $c_L^{\mathrm{exp}}$ is
    \begin{align*}
    \frac{d-\min\{L,d\}}{d}=\frac{(d-L)_+}{d}.
    \end{align*}
    Since covered preferences incur no violation and the user-optimal policy has zero violation risk,
    \begin{align*}
    R(\pi_{c_L^{\mathrm{exp}}\circ f})-R(\pi_{f_u^*})\leq\frac{(d-L)_+}{d}.
    \end{align*}
    By the population optimality of $c_{\mathcal C_L}^*$, $R(\pi_{c_{\mathcal C_L}^*\circ f})\leq R(\pi_{c_L^{\mathrm{exp}}\circ f})$. Therefore,
    \begin{equation}
    \epsilon_{\mathrm{app}}(\mathcal C_L,f_u^*)=R(\pi_{c_{\mathcal C_L}^*\circ f})-R(\pi_{f_u^*})\leq\frac{(d-L)_+}{d}.
    \label{eq:thm2-app-upper-bound}
    \end{equation}
    Together, \cref{eq:thm2-lower-bound} and \cref{eq:thm2-app-upper-bound} give
    \begin{equation}
    h_2^{-1}\!\left(\left[1-\frac{Lb}{d}\right]_+\right)\leq \mathbb{E}_U[\epsilon_{\mathrm{app}}(\mathcal C_L,f_u^*)]\leq\frac{(d-L)_+}{d}.
    \label{eq:thm2-app-bound}
    \end{equation}
    
    Combining the approximation upper bound with Theorem~\ref{thm:q2} yields
    \begin{align*}
    \mathbb E_{S_n,U}\!\left[R(\pi_{\hat c_{n,L}\circ f})-R(\pi_{f_u^*})\right]
    &=\mathbb{E}_U\left[\epsilon_{\mathrm{app}}(\mathcal C_L,f_u^*)\right]+\mathbb E_{S_n}\!\left[\epsilon_{\mathrm{gen}}(\mathcal C_L,n)\right]\\
    &\leq\frac{(d-L)_+}{d}+L\exp\!\left(-2\gamma^2\left\lfloor\frac{n}{L}\right\rfloor\right).
    \end{align*}
\end{proof}

\newpage
\subsection{Proof of Theorem~\ref{thm:q3}}
\label{app:proof-q3}
Let $f$ be the frozen foundation model, and fix an update rule $U$ and a feedback protocol throughout the analysis. For a chosen harness-capacity budget $L$, let $\mathcal C_L$ denote the class of context harnesses containing at most $L$ memory statements. Let $\tilde c_{n,L}\in\mathcal C_L$ denote the harness produced by the online update procedure after $n$ interactions, and let $\hat c_{n,L}\in\arg\min_{c\in\mathcal C_L}\widehat R_n(\pi_{c\circ f};S_n)$ be a measurably selected empirical-risk minimizer based on the interaction stream $S_n$.

For the theoretical analysis, let $\mathsf d_L(c,c')$ measure the distance between two harnesses, and assume that $(\mathcal C_L,\mathsf d_L)$ is a complete separable metric space. Let $\mathcal P_1(\mathcal C_L)$ denote the probability distributions over harnesses with finite expected distance from a fixed reference harness $c_0\in\mathcal C_L$. To compare two harness distributions $\nu,\lambda\in\mathcal P_1(\mathcal C_L)$, define their first Wasserstein distance by
\begin{align*}
    W_L(\nu,\lambda):=\inf_{\Gamma\in\Pi(\nu,\lambda)}\int_{\mathcal C_L\times\mathcal C_L}\mathsf d_L(c,c')\,\Gamma(dc,dc'),
\end{align*}
where $\Pi(\nu,\lambda)$ is the set of joint distributions with marginals $\nu$ and $\lambda$. Thus, $\mathsf d_L$ compares individual harnesses, while $W_L$ compares their distributions through the smallest achievable expected harness distance.

\begin{assumption}
    \label{ass:q3-dynamics}
    The online harness sequence under $U$ is a Markov process with a time-independent transition kernel $P_U$. All harness distributions belong to $\mathcal P_1(\mathcal C_L)$, and the induced operator $\mathcal T_U:\mathcal P_1(\mathcal C_L)\to\mathcal P_1(\mathcal C_L)$ satisfies $W_L(\mathcal T_U\nu,\mathcal T_U\lambda)\leq\rho W_L(\nu,\lambda)$ for some $0\leq\rho<1$ and all $\nu,\lambda\in\mathcal P_1(\mathcal C_L)$.
\end{assumption}

\begin{assumption}
    \label{ass:q3-risk}
    The population risk $c\mapsto R(\pi_{c\circ f})$ is bounded and continuous under $\mathsf d_L$. There exist $\mu>0$ and a deterministic sequence $\epsilon_{n,L}\geq0$ such that, for every $n\geq1$, $\mathbb E\mathsf d_L(\tilde c_{n,L},\hat c_{n,L})^2<\infty$,
    \begin{equation*}
        \widehat R_n(\pi_{\tilde c_{n,L}\circ f};S_n)-\widehat R_n(\pi_{\hat c_{n,L}\circ f};S_n)
        \geq\frac{\mu}{2}\mathsf d_L(\tilde c_{n,L},\hat c_{n,L})^2\quad\text{almost surely},
    \end{equation*}
    and
    \begin{equation*}
        \mathbb E\!\left[\left|R(\pi_{\tilde c_{n,L}\circ f})-\widehat R_n(\pi_{\tilde c_{n,L}\circ f};S_n)\right|
        +\left|R(\pi_{\hat c_{n,L}\circ f})-\widehat R_n(\pi_{\hat c_{n,L}\circ f};S_n)\right|\right]\leq2\epsilon_{n,L}.
    \end{equation*}
    For each fixed $L$, the calibration error satisfies $\epsilon_{n,L}\to0$ as $n\to\infty$.
\end{assumption}


\begin{assumption}[Persistent update bias]
    \label{ass:q3-bias}
    For $\beta_{f,n,L}:=W_L(\mathcal T_f\widehat\nu_{n,L},\widehat\nu_{n,L})$, there exists a constant $\beta_0>0$ such that $\liminf_{n\to\infty}\beta_{f,n,L}\geq\beta_0.$
\end{assumption}

\begin{proof}
    For a current harness $c\in\mathcal C_L$, let $P_U(c,\cdot)$ denote the conditional distribution of the next harness over $\mathcal C_L$ after a fresh on-policy interaction and update. Let $\nu_{n,L}$ and $\widehat\nu_{n,L}$ denote the distributions of the online output $\tilde c_{n,L}$ and the empirical-risk minimizer $\hat c_{n,L}$, respectively, over the random interaction stream and algorithmic randomness. Define the distribution update operator by
    \begin{equation}
    \mathcal T_U\nu:=\int_{\mathcal C_L}P_U(c,\cdot)\,\nu(dc).
    \label{eq:online-operator}
    \end{equation}
    By the law of total probability, the online evolution satisfies $\nu_{n+1,L}=\mathcal T_U\nu_{n,L}$. Let
    \begin{equation}
        r_{U,n,L}:=W_L(\nu_{n+1,L},\nu_{n,L}),\quad
        \beta_{U,n,L}:=W_L(\mathcal T_U\widehat\nu_{n,L},\widehat\nu_{n,L}).
        \label{eq:online-residuals}
    \end{equation}
    Here, $r_{U,n,L}$ measures the change in the online output distribution, while $\beta_{U,n,L}$ measures the change induced by applying the update operator to the distribution of the empirical-risk minimizer. The optimization gap is $\epsilon_{\mathrm{opt},n}:=R(\pi_{\tilde c_{n,L}\circ f})-R(\pi_{\hat c_{n,L}\circ f}).$

    Since $(\mathcal C_L,\mathsf d_L)$ is complete and separable, the associated Wasserstein space $(\mathcal P_1(\mathcal C_L),W_L)$ is complete. By Assumption~\ref{ass:q3-dynamics}, $\mathcal T_U$ is a contraction mapping. Banach's fixed-point theorem therefore gives a unique invariant distribution $\nu_\infty$ satisfying
    \begin{align*}
        \mathcal T_U\nu_\infty=\nu_\infty,\quad W_L(\nu_{n,L},\nu_\infty)\leq\rho^nW_L(\nu_{0,L},\nu_\infty).
    \end{align*}
    Thus $\nu_{n,L}\to\nu_\infty$ in $W_L$. By the boundedness and continuity of $c\mapsto R(\pi_{c\circ f})$ in Assumption~\ref{ass:q3-risk},
    \begin{align}
        \mathbb E[R(\pi_{\tilde c_{n,L}\circ f})]
        =\int_{\mathcal C_L}R(\pi_{c\circ f})\,\nu_{n,L}(dc)
        \rightarrow\int_{\mathcal C_L}R(\pi_{c\circ f})\,\nu_\infty(dc).
    \end{align}
    Moreover, since $\nu_{n+1,L}=\mathcal T_U\nu_{n,L}$, the contraction condition gives
    \begin{align*}
        r_{U,n+1,L}
        &=W_L(\mathcal T_U\nu_{n+1,L},\mathcal T_U\nu_{n,L})\\
        &\leq\rho W_L(\nu_{n+1,L},\nu_{n,L})=\rho r_{U,n,L}.
        \end{align*}
    Iterating this inequality yields $r_{U,n,L}\leq\rho^n r_{U,0,L}\to0$.

    Fix $n\geq1$ and write $\nu=\nu_{n,L}$ and $\widehat\nu=\widehat\nu_{n,L}$. By the triangle inequality and Assumption~\ref{ass:q3-dynamics},
    \begin{align*}
        \beta_{U,n,L}
        &=W_L(\mathcal T_U\widehat\nu,\widehat\nu)\\
        &\leq W_L(\mathcal T_U\widehat\nu,\mathcal T_U\nu)+W_L(\mathcal T_U\nu,\nu)+W_L(\nu,\widehat\nu)\\
        &=W_L(\mathcal T_U\widehat\nu,\mathcal T_U\nu)+r_{U,n,L}+W_L(\nu,\widehat\nu)\\
        &\leq(1+\rho)W_L(\nu,\widehat\nu)+r_{U,n,L}.
\end{align*}
    Since $W_L(\nu,\widehat\nu)\geq0$, this yields
    \begin{align*}
        W_L(\nu,\widehat\nu)\geq\frac{[\beta_{U,n,L}-r_{U,n,L}]_+}{1+\rho}.
    \end{align*}
    The joint distribution of $(\tilde c_{n,L},\hat c_{n,L})$ has marginals $\nu$ and $\widehat\nu$, so it belongs to $\Pi(\nu,\widehat\nu)$. By Jensen's inequality and the definition of $W_L$,
    \begin{align*}
        \mathbb E\mathsf d_L(\tilde c_{n,L},\hat c_{n,L})^2
        \geq\left(\mathbb E\mathsf d_L(\tilde c_{n,L},\hat c_{n,L})\right)^2
        \geq W_L(\nu,\widehat\nu)^2.
    \end{align*}
    Adding and subtracting the empirical risks and using the risk calibration condition in Assumption~\ref{ass:q3-risk}, we obtain
    \begin{align*}
        \mathbb E[\epsilon_{\mathrm{opt},n}]
        =&\mathbb E\!\left[\widehat R_n(\pi_{\tilde c_{n,L}\circ f};S_n)-\widehat R_n(\pi_{\hat c_{n,L}\circ f};S_n)\right]
        +\mathbb E\!\left[R(\pi_{\tilde c_{n,L}\circ f})-\widehat R_n(\pi_{\tilde c_{n,L}\circ f};S_n)\right]\\
        &-\mathbb E\!\left[R(\pi_{\hat c_{n,L}\circ f})-\widehat R_n(\pi_{\hat c_{n,L}\circ f};S_n)\right]\\
        \geq&\mathbb E\!\left[\widehat R_n(\pi_{\tilde c_{n,L}\circ f};S_n)-\widehat R_n(\pi_{\hat c_{n,L}\circ f};S_n)\right]-2\epsilon_{n,L}.
    \end{align*}
    Together with the empirical risk growth condition in Assumption~\ref{ass:q3-risk},
    \begin{align*}
        \mathbb E[\epsilon_{\mathrm{opt},n}]
        &\geq\mathbb E\!\left[\widehat R_n(\pi_{\tilde c_{n,L}\circ f};S_n)-\widehat R_n(\pi_{\hat c_{n,L}\circ f};S_n)\right]-2\epsilon_{n,L}\\
        &\geq\frac{\mu}{2}\mathbb E\mathsf d_L(\tilde c_{n,L},\hat c_{n,L})^2-2\epsilon_{n,L}\\
        &\geq\frac{\mu}{2}W_L(\nu,\widehat\nu)^2-2\epsilon_{n,L}.
    \end{align*}
    Combining these inequalities with $r_{U,n,L}\leq\rho^n r_{U,0,L}$ gives
    \begin{align*}
        \mathbb E[\epsilon_{\mathrm{opt},n}]
        &\geq\frac{\mu}{2(1+\rho)^2}[\beta_{U,n,L}-r_{U,n,L}]_+^2-2\epsilon_{n,L}\\
        &\geq\frac{\mu}{2(1+\rho)^2}[\beta_{U,n,L}-\rho^n r_{U,0,L}]_+^2-2\epsilon_{n,L}.
    \end{align*}
    Finally, since $\rho^n r_{U,0,L}\to0$ and $\epsilon_{n,L}\to0$ by Assumption~\ref{ass:q3-risk}, taking the lower limit and applying Assumption~\ref{ass:q3-bias} yields
    \begin{align*}
        \liminf_{n\to\infty}\mathbb E[\epsilon_{\mathrm{opt},n}]
        \geq\frac{\mu\beta_0^2}{2(1+\rho)^2}>0.
    \end{align*}
\end{proof}

\end{document}